\def\eqref#1{equation~\ref{#1}}
\def\1{\bm{1}}
\DeclareMathAlphabet{\mathsfit}{\encodingdefault}{\sfdefault}{m}{sl}
\SetMathAlphabet{\mathsfit}{bold}{\encodingdefault}{\sfdefault}{bx}{n}
\newtcolorbox{bluebox}{
  colback=blue!3!white,
  colframe=blue!60!black,
}
\mathchardef\mhyphen="2D
\newtheorem{definition}{Definition}
\newtheorem{lemma}{Lemma}
\newtheorem{thm}{Theorem}
\newtheorem{assum}{Assumption}
\title{Trade-off in Estimating the Number of Byzantine Clients in Federated Learning}
\author{Ziyi Chen, Su Zhang, Heng Huang\\ 
Department of Computer Science\\
University of Maryland\\
College Park, MD 20742, USA \\
\texttt{\{zc286,suzhang1,heng\}@umd.edu}}
\begin{document}
\maketitle

\doparttoc 
\faketableofcontents 

\begin{abstract}
Federated learning has attracted increasing attention at recent large-scale optimization and machine learning research and applications, but is also vulnerable to Byzantine clients that can send any erroneous signals. Robust aggregators are commonly used  to resist Byzantine clients. This usually requires to estimate the unknown number $f$ of Byzantine clients, and thus accordingly select the aggregators with proper degree of robustness (i.e., the maximum number $\hat{f}$ of Byzantine clients allowed by the aggregator). Such an estimation should have important effect on the performance, which has not been systematically studied to our knowledge. This work will fill in the gap by theoretically analyzing the worst-case error of aggregators as well as its induced federated learning algorithm for any cases of $\hat{f}$ and $f$. Specifically, we will show that underestimation ($\hat{f}<f$) can lead to arbitrarily poor performance for both aggregators and federated learning. For non-underestimation ($\hat{f}\ge f$), we have proved optimal lower and upper bounds of the same order on the errors of both aggregators and federated learning. All these optimal bounds are proportional to $\hat{f}/(n-f-\hat{f})$ with $n$ clients, which monotonically increases with larger $\hat{f}$. This indicates a fundamental trade-off: while an aggregator with a larger robustness degree $\hat{f}$ can solve federated learning problems of wider range $f\in [0,\hat{f}]$, the performance can deteriorate when there are actually fewer or even no Byzantine clients (i.e., $f\in [0,\hat{f})$). 
\end{abstract}

\section{Introduction}
Federated learning proposed by \citep{mcmahan2017communication} is an important and popular framework for large-scale optimization and machine learning where multiple clients (i.e. computing devices) collaboratively optimize the objective function while keeping their local training data private. The most fundamental and common algorithm is federated averaging (FedAvg) \citep{mcmahan2017communication,li2021convergence,collins2022fedavg}, where the clients update their own model by using multiple steps of gradient-based approach on their own data, upload their updated models to the server, and download the average of these models from the server. 

However, federated learning algorithms such as FedAvg is vulnerable to Byzantine clients \citep{lamport2019byzantine} which can upload arbitrary model to the server. A fundamental and popular way to make federated learning algorithm robust to Byzantine clients is to apply a robust aggregation to the uploaded models to filter outliers \citep{li2021byzantine,li2023experimental,allouah2024byzantine}. Some representative aggregations include geometric median (GM) \citep{chen2017distributed,pillutla2022robust}, coordinate-wise trimmed mean (CWTM) \citep{yin2018byzantine}, coordinate-wise median (CWMed) \citep{yin2018byzantine}, Krum \citep{blanchard2017machine}, centered clipping \citep{karimireddy2021learning}, clustering \citep{sattler2020byzantine,li2021byzantine_spatial}, etc., as summarized and empirically compared in \citep{li2023experimental}. Existing works typically require to estimate the actual number $f$ or the fraction of the Byzantine clients to select the maximum number $\hat{f}$ that the aggregator can tolerate \citep{karimireddy2022byzantinerobust,gupta2023byzantine,allouah2024byzantine,otsuka2025delayed}, but $f$ is usually unknown in applications. Hence, an estimation $\hat{f}\approx f$ is usually needed, while it still lacks a systematic study on the effect of the estimation $\hat{f}$ on the performance of federated learning. Therefore, it is natural to ask the following fundamental research question: 
\begin{bluebox}
\vspace{-3pt}
\textit{\textbf{Q:} Applying an algorithm that can resist $\hat{f}$ Byzantine clients to federated learning problem with $f$ actual Byzantine clients, what is the effect of the estimated number $\hat{f}$ on the performance?} 
\vspace{-3pt}
\end{bluebox}

\subsection{Our Contributions}
To our knowledge, this work for the first time systematically investigates the theoretical effect of the estimated number of Byzantine clients on federated learning performance. In particular, we theoretically prove that underestimation of $f$ (i.e. $\hat{f}<f$) can lead to arbitrarily poor performance on both aggregation error and convergence of the commonly used Federated Robust Averaging (FedRo) algorithm, even if the objective function satisfies the Polyak-{\L}ojasiewicz (P\L) condition that is amenable to global convergence. When $\hat{f}\ge f$ (non-underestimation), we obtain the lower and upper bounds of the aggregation error and convergence rate of FedRo. Each lower bound is tight as its order matches the corresponding upper bound. All these tight bounds are proportional to $\frac{\hat{f}}{n-f-\hat{f}}$ with $n$ clients, which is monotonically increasing as $\hat{f}$ increases from $f$. This indicates a fundamental trade-off: while an aggregator with a larger robustness degree $\hat{f}$ can solve federated learning problems of wider range $f\in [0,\hat{f}]$, the performance can deteriorate when there are actually fewer or even no Byzantine clients (i.e., $f\in [0,\hat{f})$). 

\subsection{Related Works}
Some federated learning works \citep{bagdasaryan2020backdoor,tolpegin2020data,wang2020attack,xie2020dba} focus on targeted attacks (i.e., back-door attacks) that fool the global model to predict certain samples with some incorrect targeted labels, while this work focuses on untargeted attacks (i.e., Byzantine attacks) \citep{li2023experimental,allouah2024byzantine,xu2025achieving} that hamper the overall learning performance with no specific focus. In addition to aggregation-based approach of our focus, various other approaches have been proposed to resist Byzantine clients in federated learning. \cite{xie2019zeno,cao2021fltrust,park2021sageflow,kritharakis2025fedgreed} allow the server to preserve some representative data samples to evaluate and select the uploaded models, which is not always possible in practice since these representative samples can be similar to those on the local clients and thus raise privacy concern \citep{xu2025achieving}. \cite{panda2022sparsefed,meng2023enhancing,zhang2023byzantine,xu2025achieving} sparsify the model updates to alleviate the effect of Byzantine clients.

\section{Preliminaries}
\textbf{Federated Learning Problem with Byzantine Clients: } In standard federated learning, there is a server communicating with $n$ clients. Among the $n$ clients indexed by $[n]\overset{\rm def}{=}\{1,2,\ldots,n\}$ respectively, there are $f$ Byzantine clients sending any erroneous signals (Byzantine attack) to interfere with the server. Denote $\mathcal{H}$ as the set of the other honest clients, with size $|\mathcal{H}|=n-f$. The server can only communicate model parameters with the clients, but does not know the number and identity of the Byzantine clients. These honest clients aim to collaboratively solve the following optimization problem, under the interference of the $f$ Byzantine clients. 
\begin{align}
\min_{w\in\mathbb{R}^d} \Big\{\ell_{\mathcal{H}}(w)\overset{\rm def}{=}\frac{1}{|\mathcal{H}|}\sum_{k\in\mathcal{H}} \ell_k(w)\Big\}.\label{eq:HFL}
\end{align}
where the loss function $\ell_k:\mathbb{R}^d\to\mathbb{R}$ is associated with the local private data in the $k$-th client, and is thus unknown to the server and the other clients. Here, we assume that less than half of the clients are Byzantine (i.e., $f<\frac{n}{2}$) since this problem has been proved intractable otherwise \citep{liu2021approximate}. \\

\noindent\textbf{Federated Robust Averaging (FedRo) Algorithm: } The Federated Robust Averaging (FedRo) algorithm (as shown in Algorithm \ref{alg:FedAvg}) is commonly used to solve the federated learning problem (\ref{eq:HFL}) with Byzantine clients \citep{li2021byzantine,li2023experimental,allouah2024byzantine}. In each communication round, every client downloads the model $w_t$ from the server. Then every honest client $k\in\mathcal{H}$ performs local gradient descent updates (\ref{eq:localGD}) $H$ times on its local loss function $\ell_k$, while the Byzantine clients can upload arbitrary vectors to interfere with the learning. At the end of each round, the server updates its global parameter by aggregating the uploaded vectors. Since the server does not know which clients are Byzantine, the selection of the aggregator $\mathcal{A}:(\mathbb{R}^d)^n\to \mathbb{R}^d$ is essential to ensure the updated global model is robust to the Byzantine attacks. When selecting the averaging aggregator $\mathcal{A}(\{x_k\}_{k=1}^n)=\frac{1}{n}\sum_{k=1}^nx_k$, the FedRo algorithm reduces to the popular federated averaging (FedAvg) algorithm \citep{mcmahan2017communication,li2021convergence,collins2022fedavg} which is vulnerable to Byzantine clients.
\begin{algorithm}
	\caption{FedRo: Federated Robust Averaging Algorithm}\label{alg:FedAvg}
	{\bf Input:} The set of honest clients $\mathcal{H}\subset [n]$ with size $|\mathcal{H}|=n-f$ ($0\le f<\frac{n}{2}$), number of rounds $T$, number of local parameter updates $H$, initial parameter $w_0 \in \mathbb{R}^d$, stepsize $\gamma_t$, aggregator $\mathcal{A}$. \\
	\For{\normalfont{communication rounds} $j=0,1, \ldots, T-1$}{
        The server sends $w_t$ to every clients.\\
        \For{\normalfont{all clients} $k \in [n]$ \textbf{in parallel}}{
            \eIf{$k\in\mathcal{H}$ \normalfont{(honest client)}}{
            \text{Initialize} $w_{t,0}^{(k)}=w_t$.\\
            \For{$h=0,\ldots,H-1$}{
                Local Parameter Update:
                \begin{align}
                w_{t,h+1}^{(k)}=w_{t,h}^{(k)}-\gamma_t \nabla \ell_k(w_{t,h}^{(k)})\label{eq:localGD}
                \end{align}
            }
            Upload $w_{t}^{(k)}=w_{t,H}^{(k)}$ to the server.
            }{
            The Byzantine client $k$ uploads arbitrary $w_{t}^{(k)}\in\mathbb{R}^d$ to the server.
            }
        }
        The server updates the global parameter using aggregator $\mathcal{A}$.
        \begin{align}
        w_{t+1}=w_t+\mathcal{A}(\{w_t^{(k)}-w_t\}_{k=1}^n).\label{eq:Fed_agg}
        \end{align}
	} 
	\textbf{Output:} Select a parameter from $\{w_t\}_{t=0}^{T-1}$ uniformly at random.
\end{algorithm}

\section{Aggregation Error Analysis}\label{sec:agg_err}
The aggregator $\mathcal{A}$ is the core of the robust federated learning algorithms such as FedRo (Algorithm \ref{alg:FedAvg}), so it is essential to select a proper aggregator with certain robustness properties to Byzantine clients. Multiple robustness metrics have been proposed. This work will focus on the following $(f,\kappa)$-robustness \citep{allouah2023fixing} which unifies the other robustness metrics including $(f,\lambda)$-resilient averaging \citep{farhadkhani2022byzantine} and $(\delta_{\max},c)$-ARAgg \citep{karimireddy2022byzantinerobust}. 
\begin{definition}[$(f,\kappa)$-robust aggregator]\label{def:fk_agg}
For any $\kappa\ge 0$ and integer $0\le f<\frac{n}{2}$, an aggregator $\mathcal{A}:(\mathbb{R}^{d})^n\to\mathbb{R}^{d}$ is called $(f,\kappa)$-robust if for any $x_1,\ldots,x_n\in\mathbb{R}^d$ and any $S\subset [n]\overset{\rm def}{=}\{1,2,\ldots,n\}$ with size $|S|=n-f$, we have
\begin{align}
\|\mathcal{A}(\{x_k\}_{k=1}^n)-\overline{x}_S\|^2\le \frac{\kappa}{|S|}\sum_{i\in S}\|x_i-\overline{x}_S\|^2\label{eq:fk_agg}
\end{align}
where $\overline{x}_S=\frac{1}{|S|}\sum_{i\in S}x_i$, and $\|\mathcal{A}(\{x_k\}_{k=1}^n)-\overline{x}_S\|^2$ is called the aggregation error. 
\end{definition}
Since the server does not know which clients are Byzantine, the aggregation error bound (\ref{eq:fk_agg}) should hold for any set $S$ after removing $f$ possibly Byzantine clients. Therefore, an $(f,\kappa)$-robust aggregator can resist $f$ Byzantine clients with robustness coefficient $\kappa$. Some commonly used aggregators have been proved $(f,\kappa)$-robust, such as geometric median (GM), coordinate-wise trimmed mean (CWTM), coordinate-wise median (CWMed), Krum, etc. \citep{allouah2023fixing}, as shown in the first row of Table \ref{table:kappa}. Ideally, one can apply an $(f,\kappa)$-robust aggregator when there are actually $f$ Byzantine clients. However, $f$ is usually unknown in practice, so we can only use its estimation $\hat{f}$ and apply an $(\hat{f},\hat{\kappa})$ ($\hat{\kappa}>0$) aggregator. This section will analyze the aggregation error of an $(\hat{f},\hat{\kappa})$-robust aggregator when there are actually $f$ Byzantine clients, for both underestimation ($\hat{f}<f$) and non-underestimation ($\hat{f}\ge f$). To our knowledge, the existing literature has only studied a small number of special cases, including $f=\hat{f}$ (exact estimation) \citep{allouah2023fixing} and $f=0$ (no Byzantine clients) \citep{yang2025tension} as shown in the first two rows of Table \ref{table:kappa}. Throughout this work, we always assume that $\hat{f},f\in \big[0,\frac{n}{2}\big)$ to ensure tractability. 
\begin{table*}[h]
\caption{The value of $\kappa$ for $(\hat{f},\hat{\kappa})$-robust aggregator that is also $(f,\kappa)$-robust. We use the names $\hat{f}$-Krum, $\hat{f}$-NNM and $TM_{\hat{f}/n}$ to stress their dependence on the parameter $\hat{f}$ for clarity, while geometric median (GM) and coordinate-wise median (CWMed) do not depend on $\hat{f}$. The composite aggregator $\hat{f}$-Krum$\circ$$\hat{f}$-NNM is defined by Definition \ref{def:NNM} with $\mathcal{A}=\hat{f}$-Krum.}\label{table:kappa}
\resizebox{\linewidth}{!}{%
\centering
\begin{tabular}{cccccccc}
\hline
$f$ & GM & CWTM $TM_{\hat{f}/n}$ & CWMed & $\hat{f}$-Krum & $\hat{f}$-Krum$\circ$$\hat{f}$-NNM & \textbf{Lower bound}\\ \hline
$f=\hat{f}$ \citep{allouah2023fixing} & $4\big(\frac{n-\hat{f}}{n-2\hat{f}}\big)^2$ & $\frac{6\hat{f}}{n-2\hat{f}}\big(\frac{n-\hat{f}}{n-2\hat{f}}\big)$ & $4\big(\frac{n-\hat{f}}{n-2\hat{f}}\big)^2$ & $6\big(\frac{n-\hat{f}}{n-2\hat{f}}\big)$ & - & $\frac{\hat{f}}{n-2\hat{f}}$ \\ \hline
$f=0$ \citep{yang2025tension} & $1$ & $\frac{\hat{f}}{n-\hat{f}}$ & $\frac{\lfloor\frac{n-1}{2}\rfloor}{n-\lfloor\frac{n-1}{2}\rfloor}$ & - & - & $\frac{\hat{f}}{n-\hat{f}}$ \\ \hline
$f\le\hat{f}$ (Theorem \ref{thm:agg_fdec_lower}) & - & - & - & - & $\frac{84\hat{f}}{n-f-\hat{f}}$ & $\frac{\hat{f}}{n-f-\hat{f}}$ \\ \hline
\end{tabular}
}
\end{table*}
\subsection{Aggregation Error for Underestimation ($\hat{f}<f$)}
\begin{thm}\label{thm:agg_finc_lower}
For any $\hat{f}\in\big(0,\frac{n}{2}\big)$ and $\hat{\kappa}>0$, there exists an $(\hat{f},\hat{\kappa})$-robust aggregator that is not $(f,\kappa)$-robust for any $f\in\big(\hat{f},\frac{n}{2}\big)$ and $\kappa>0$. 
\end{thm}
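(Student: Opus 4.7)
The plan is to exhibit one explicit $(\hat{f},\hat{\kappa})$-robust aggregator and show it violates \eqref{eq:fk_agg} for every pair $(f,\kappa)$ with $f>\hat{f}$ and $\kappa>0$ on a single, carefully chosen input. My candidate is the coordinate-wise trimmed mean $TM_{\hat{f}/n}$, which removes the $\hat{f}$ largest and $\hat{f}$ smallest entries per coordinate and averages the rest. By the first row of Table~\ref{table:kappa}, $TM_{\hat{f}/n}$ is $(\hat{f},\hat{\kappa}_0)$-robust with $\hat{\kappa}_0=\frac{6\hat{f}(n-\hat{f})}{(n-2\hat{f})^2}$, and hence $(\hat{f},\hat{\kappa})$-robust for every $\hat{\kappa}\ge \hat{\kappa}_0$. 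For smaller admissible $\hat{\kappa}$ (down to the information-theoretic lower bound $\hat{f}/(n-2\hat{f})$ in Table~\ref{table:kappa}) I would substitute one of the sharper $(\hat{f},\hat{\kappa})$-robust aggregators from that table; the counterexample below only uses that the aggregator retains at least one Byzantine input with positive weight whenever $f-\hat{f}\ge 1$.

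For any prescribed $f\in(\hat{f},\tfrac{n}{2})$ and $\kappa>0$, I would fix an arbitrary honest set $\mathcal{H}\subset[n]$ with $|\mathcal{H}|=n-f$, set $x_k=\mathbf{0}$ for every $k\in\mathcal{H}$, and set $x_k=Me_1$ for each of the $f$ Byzantine indices, where $e_1\in\mathbb{R}^d$ is the first standard basis vector and $M>0$ is a free scale. Then $\overline{x}_\mathcal{H}=\mathbf{0}$ and $\sum_{i\in\mathcal{H}}\|x_i-\overline{x}_\mathcal{H}\|^2=0$, so the right-hand side of \eqref{eq:fk_agg} with $S=\mathcal{H}$ is identically zero for every $\kappa>0$. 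On the left-hand side, $TM_{\hat{f}/n}$ acts coordinate-wise: in coordinate~$1$ it discards $\hat{f}$ copies of $M$ and $\hat{f}$ copies of $0$, leaving $f-\hat{f}\ge 1$ copies of $M$ and $n-f-\hat{f}>0$ copies of $0$ with average $(f-\hat{f})M/(n-2\hat{f})$, while every other coordinate equals $0$. Therefore
\begin{align*}
\big\|TM_{\hat{f}/n}(\{x_k\}_{k=1}^n)-\overline{x}_\mathcal{H}\big\|^2 = \Big(\tfrac{(f-\hat{f})M}{n-2\hat{f}}\Big)^2 > 0 = \tfrac{\kappa}{|\mathcal{H}|}\sum_{i\in\mathcal{H}}\|x_i-\overline{x}_\mathcal{H}\|^2,
\end{align*}
so \eqref{eq:fk_agg} is violated for every $\kappa>0$ (by an arbitrarily large factor as $M\to\infty$). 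This shows $TM_{\hat{f}/n}$ is not $(f,\kappa)$-robust for the prescribed pair.

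The main subtlety is not the counterexample itself — collapsing all honest clients to a single point zeros out the right-hand side of \eqref{eq:fk_agg} and forces the bound to demand exact recovery of $\overline{x}_\mathcal{H}$, which no trimming-based aggregator can provide once at least one Byzantine vector survives the trimming — but choosing a single aggregator that is $(\hat{f},\hat{\kappa})$-robust for the prescribed $\hat{\kappa}$. For $\hat{\kappa}\ge \hat{\kappa}_0$ the trimmed mean works directly; for $\hat{\kappa}$ near the feasibility threshold one switches to a tighter candidate from Table~\ref{table:kappa}, for which the same scaling argument in $M$ carries through verbatim.
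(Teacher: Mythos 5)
Your proof is essentially identical to the paper's: both use CWTM $TM_{\hat{f}/n}$ (certified $(\hat{f},\hat{\kappa})$-robust via Proposition 2 of \citet{allouah2023fixing}) on the configuration with all $n-f$ honest clients at $\mathbf{0}$ and $f$ Byzantine clients at a common nonzero point, so that the right-hand side of \eqref{eq:fk_agg} vanishes while the trimmed mean retains $f-\hat{f}\ge 1$ Byzantine entries and outputs something nonzero. One caveat: your fallback remark that for small $\hat{\kappa}$ one could substitute GM, CWMed, or Krum and "the same scaling argument carries through verbatim" is false --- on this very input those aggregators return the majority point $\mathbf{0}=\overline{x}_{\mathcal{H}}$ exactly (e.g., Krum's score for a point at $\mathbf{0}$ is $(f-\hat{f})M<(n-f-\hat{f})M$, the score for a Byzantine point), so no violation occurs; however, the paper's own proof likewise only covers the $\hat{\kappa}$ certified for CWTM, so this does not put you behind the paper.
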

\noindent\textbf{Remark: } Theorem \ref{thm:agg_finc_lower} indicates that an $(\hat{f},\hat{\kappa})$-robust aggregator does not necessarily tolerate more than $\hat{f}$ Byzantine clients. Therefore, if we underestimate the number $f$ of Byzantine clients, the aggregator can have arbitrarily inaccurate performance. 

A typical example of such an aggregator satisfying Theorem \ref{thm:agg_finc_lower} is the commonly used coordinate-wise trimmed mean (CWTM) aggregator \citep{yin2018byzantine,allouah2023fixing,yang2025tension}, as defined below. 
\begin{definition}\label{def:CWTM}
For any $x_1,\ldots,x_n\in\mathbb{R}^d$, the CWTM aggregator denoted as $TM_{\hat{f}/n}:(\mathbb{R}^{d})^n\to\mathbb{R}^d$ is defined as follows
\begin{align}
[TM_{\hat{f}/n}(x_1,\ldots,x_n)]_j=\frac{1}{n-2\hat{f}}\sum_{x\in\mathcal{X}_j}x,\label{eq:CWTM}
\end{align}
where $[v]_j$ denotes the $j$-th coordinate of a vector $v$, and the set $\mathcal{X}_j$ is obtained by deleting the $\hat{f}$ smallest and $\hat{f}$ largest values from $\{[x_i]_j\}_{i=1}^n$.     
\end{definition}
In other words, $TM_{\hat{f}/n}$ averages the remaining $n-2\hat{f}$ samples of each coordinate after removing the $\hat{f}$ largest and $\hat{f}$ smallest samples. $TM_{\hat{f}/n}$ has been proved to be an $(\hat{f},\hat{\kappa})$-robust aggregator (see Proposition 2 of \cite{allouah2023fixing}) with $\hat{\kappa}=\frac{6\hat{f}}{n-2\hat{f}}\big(1+\frac{\hat{f}}{n-2\hat{f}}\big)$. However, if there are $f$ ($f>\hat{f}$) extremely large (or small) $x_i$ given by Byzantine clients, then after removing $\hat{f}$ largest (smallest) elements, the remaining extreme values can still heavily affect the average (\ref{eq:CWTM}). This intuition motivates the counter example for proving Theorem \ref{thm:agg_finc_lower} in Appendix \ref{sec:proof_agg_finc_lower}. 

\subsection{Aggregation Error for Non-underestimation ($\hat{f}\ge f$)}
When the estimated number $\hat{f}$ of Byzantine clients is not less than the true number $f$, any $(\hat{f},\hat{\kappa})$-robust aggregator satisfies the following important properties. 
\begin{thm}\label{thm:agg_fdec_lower}
For any $0\le f\le \hat{f}<\frac{n}{2}$, an $(\hat{f},\hat{\kappa})$-robust aggregator $\mathcal{A}$ satisfies: 
\begin{enumerate}
    \item $\mathcal{A}$ is always $(f,\kappa)$-robust with $\kappa=\hat{\kappa}$.\label{item:kappa}
    \item If $\mathcal{A}$ is $(f,\kappa)$-robust, then $\kappa\ge \frac{\hat{f}}{n-f-\hat{f}}$.\label{item:kappa_ge}
    \item Furthermore, there exists such an $\mathcal{A}$ that is $(f,\kappa)$-robust with $\kappa\le\frac{84\hat{f}}{n-f-\hat{f}}$.\label{item:kappa_le}
\end{enumerate}
\end{thm}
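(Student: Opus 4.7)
The plan is to handle the three parts separately, in increasing order of difficulty.

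For Part \ref{item:kappa}, I would use an averaging argument. Given any $S'\subset[n]$ with $|S'|=n-f$, since $n-\hat{f}\le n-f$ a uniformly random $S\subset S'$ with $|S|=n-\hat{f}$ is well-defined. A short symmetry calculation gives $\mathbb{E}_S[\overline{x}_S]=\overline{x}_{S'}$, so Jensen's inequality on the convex map $y\mapsto\|\mathcal{A}(\{x_k\})-y\|^2$ yields $\|\mathcal{A}-\overline{x}_{S'}\|^2\le\mathbb{E}_S[\|\mathcal{A}-\overline{x}_S\|^2]$. Applying the hypothesized $(\hat{f},\hat{\kappa})$-robustness to each $S$ bounds the right-hand side by $\hat{\kappa}\,\mathbb{E}_S[\mathrm{Var}(S)]$, and the standard sampling-without-replacement identity $\mathbb{E}_S[\mathrm{Var}(S)]=\frac{(n-f)(n-\hat{f}-1)}{(n-\hat{f})(n-f-1)}\mathrm{Var}(S')$ shows this is at most $\hat{\kappa}\,\mathrm{Var}(S')$, proving the claim with $\kappa=\hat{\kappa}$.

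For Part \ref{item:kappa_ge}, I would exhibit an explicit counterexample. Take inputs consisting of $n-\hat{f}$ zero vectors and $\hat{f}$ copies of a fixed nonzero $u\in\mathbb{R}^d$. Letting $S_0$ be the set of zero-indices (of size $n-\hat{f}$), we have $\overline{x}_{S_0}=0$ and $\mathrm{Var}(S_0)=0$, so the hypothesized $(\hat{f},\hat{\kappa})$-robustness immediately forces $\mathcal{A}(\{x_k\})=0$ on this input. Now let $S_1$ of size $n-f$ consist of all $\hat{f}$ copies of $u$ together with any $n-f-\hat{f}$ of the zeros (possible since $\hat{f}<n/2$ and $f\le\hat{f}$ imply $n-f-\hat{f}>0$). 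A direct computation yields $\|\mathcal{A}-\overline{x}_{S_1}\|^2=\frac{\hat{f}^2}{(n-f)^2}\|u\|^2$ and $\mathrm{Var}(S_1)=\frac{\hat{f}(n-f-\hat{f})}{(n-f)^2}\|u\|^2$, and substituting these into the defining $(f,\kappa)$-robustness inequality cancels $\|u\|^2$ and $(n-f)^2$ to yield $\kappa\ge\frac{\hat{f}}{n-f-\hat{f}}$.

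For Part \ref{item:kappa_le}, I would exhibit the composite aggregator $\hat{f}$-Krum$\circ$$\hat{f}$-NNM and propagate a robustness bound through its two layers. The $\hat{f}$-NNM step replaces each input $x_i$ by the average $\tilde{x}_i$ of its $n-\hat{f}$ nearest neighbors; for any $S$ with $|S|=n-f$, the neighborhood $\mathcal{N}_i$ of each index contains at most $f$ indices outside $S$, hence at least $n-\hat{f}-f$ inside $S$. The key lemma I would establish is that for every $i\in S$ one has $\|\tilde{x}_i-\overline{x}_S\|^2=O\bigl(\tfrac{\hat{f}}{n-f-\hat{f}}\mathrm{Var}(S)\bigr)$, which amounts to re-deriving the NNM-contraction result of \citep{allouah2023fixing} in the regime $f<\hat{f}$. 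Since $\hat{f}$-Krum selects an index whose ball of $n-\hat{f}-1$ nearest neighbors has small radius, applying the standard Krum robustness analysis to the already-contracted inputs produces an aggregation error of the same order; careful tracking of the multiplicative constants through both steps yields the explicit factor $84$.

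The main obstacle will be Part \ref{item:kappa_le}. Parts \ref{item:kappa} and \ref{item:kappa_ge} each reduce to a single calculation once the right averaging trick or counterexample is in place. The upper bound by contrast requires re-deriving both the NNM-contraction and the Krum-selection guarantees with $\hat{f}$ strictly larger than $f$: the existing analyses in the literature are stated for $f=\hat{f}$ and produce constants with denominator $n-2\hat{f}$, so the technical work is to verify that the correct denominator in the generalized setting is $n-f-\hat{f}$ and that the telescoped constant stays bounded by $84$ rather than blowing up.
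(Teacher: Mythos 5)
Your proposal follows essentially the same route as the paper for all three items: Item \ref{item:kappa} is the paper's averaging-over-subsets argument (you phrase it as sampling without replacement, the paper writes out the combinatorial sum over all $\binom{n-f}{\hat f-f}$ subsets, but the Jensen step and the cancellation of the cross term are the same); Item \ref{item:kappa_ge} is exactly the paper's two-cluster instance with $n-\hat f$ zeros and $\hat f$ copies of a nonzero point, and your computation of $\overline{x}_{S_1}$ and $\mathrm{Var}(S_1)$ matches the paper's; Item \ref{item:kappa_le} uses the same aggregator $\hat f$-Krum$\circ\hat f$-NNM and the same two-lemma decomposition (the paper's Lemmas \ref{lemma:NNM} and \ref{lemma:Krum}, which give $\kappa'\le\frac{12\hat f(\kappa+1)}{n-f}$ and $\kappa=\frac{6(n-f)}{n-f-\hat f}$ respectively, combining to $84$).

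One technical point in your sketch of Item \ref{item:kappa_le} would fail if taken literally: the claim that \emph{for every} $i\in S$ one has $\|\tilde x_i-\overline{x}_S\|^2=O\bigl(\tfrac{\hat f}{n-f-\hat f}\mathrm{Var}(S)\bigr)$ is false as a per-point statement. If $x_k$ is an extreme outlier \emph{within} $S$ (say $x_k=u$, all other points of $S$ at the origin, and the $f$ Byzantine points also at $u$), then $\tilde x_k$ retains a constant fraction of $\|u\|$ while $\mathrm{Var}(S)\approx\|u\|^2/|S|$, so the per-point ratio scales like $n$, not like $\hat f/(n-f-\hat f)$. The correct form of the NNM contraction --- and what the paper actually proves in Eqs.\ (\ref{eq:yj2Savg})--(\ref{eq:err_sq_decom}) --- bounds $\|\tilde x_k-\overline{x}_S\|^2$ by $\frac{6\hat f}{|S|(n-f)}\sum_{i\in S}\|x_i-x_k\|^2$ and only then \emph{averages over} $k\in S$, at which point the double sum collapses to $\frac{12\hat f}{n-f}\cdot\frac{1}{|S|}\sum_{i\in S}\|x_i-\overline{x}_S\|^2$. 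Since the downstream Krum analysis only consumes the averaged quantity, this is a repair of the intermediate lemma's statement rather than of the overall strategy, but as written that step is the one place your outline does not go through.
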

\textbf{Remark:} In Theorem \ref{thm:agg_fdec_lower}, Item \ref{item:kappa} shows that an $(\hat{f},\hat{\kappa})$-robust aggregator is able to tackle any problem with fewer Byzantine clients $f\le\hat{f}$. However, the robust coefficient $\kappa$ has a lower bound $\frac{\hat{f}}{n-f-\hat{f}}$ by Item \ref{item:kappa_ge}, which reduces to the existing lower bound $\frac{\hat{f}}{n-2f}$ when $f=\hat{f}$ \citep{allouah2023fixing} and to $\frac{\hat{f}}{n-2\hat{f}}$ when $f=0$ \citep{yang2025tension}. This lower bound $\frac{\hat{f}}{n-f-\hat{f}}$ is order-optimal since a certain $(\hat{f},\hat{\kappa})$-robust aggregator can achieve the order-matching upper bound $\kappa\le \mathcal{O}\big(\frac{\hat{f}}{n-f-\hat{f}}\big)$ by Item \ref{item:kappa_le}. Note that as $\hat{f}$ increases from $f$, this order-optimal bound $\mathcal{O}\big(\frac{\hat{f}}{n-f-\hat{f}}\big)$ is increasing, which yields a larger aggregation error. 

\subsection{Proof Sketch for Item \ref{item:kappa_le} of Theorem \ref{thm:agg_fdec_lower}} 
Item \ref{item:kappa_le} is the most challenging to prove in Theorem \ref{thm:agg_fdec_lower}. One challenge is to find out such a proper aggregator with order-matching lower bound $\kappa\le\mathcal{O}\big(\frac{\hat{f}}{n-f-\hat{f}}\big)$ under $f$ Byzantine clients, partially since $\kappa$ of the commonly used aggregators like GM, CWTM, CWMed and Krum do not match the lower bound even in the simple special case of $f=\hat{f}$, as shown in the first row of Table \ref{table:kappa}. To improve $\kappa$ of these aggregators, we will composite them with the nearest neighbor mixing (NNM) proposed by \citep{allouah2023fixing}, an aggregator booster defined as follows. 
\begin{definition}\label{def:NNM}
For any $\hat{f}\in\big[0,\frac{n}{2}\big)$, $k\in[n]$ and $x_1,\ldots,x_n\in\mathbb{R}^d$, denote $\mathcal{N}_k\subset [n]$ as the set of $(n-\hat{f})$ indexes from $[n]$ such that $\{x_i\}_{i\in\mathcal{N}_k}$ are the $(n-\hat{f})$ nearest neighbors of $x_k$. In other words, $\max_{i\in\mathcal{N}_k}\|x_i-x_k\|\le \min_{j\in[n]\backslash\mathcal{N}_k}\|x_j-x_k\|$. The mapping $\hat{f}$-NNM: $(\mathbb{R}^{d})^n\to (\mathbb{R}^{d})^n$ is defined as follows. 
\begin{align}
\hat{f}\mhyphen{\rm NNM}(x_1,\ldots,x_n)=(y_1,\ldots,y_n),{\rm~where~}y_k=\frac{1}{n-\hat{f}}\sum_{i\in\mathcal{N}_k}x_i.\label{eq:NNM}
\end{align}
For any aggregator $\mathcal{A}:(\mathbb{R}^d)^n\to\mathbb{R}^d$, define the composite aggregator $(\mathcal{A}\circ\hat{f}\mhyphen{\rm NNM})(x_1,\ldots,x_n)=\mathcal{A}(y_1,\ldots,y_n)$, with the notations in Eq. (\ref{eq:NNM}). 
\end{definition}

Lemma 1 of \citep{allouah2023fixing} has proved that for any $(\hat{f},\hat{\kappa})$-robust aggregator $\mathcal{A}$, the composite aggregator $\mathcal{A}\circ\hat{f}\mhyphen{\rm NNM}$ is $(\hat{f},\kappa')$-robust with improved robustness coefficient $\kappa'\le\frac{8\hat{f}(\hat{\kappa}+1)}{n-\hat{f}}$. Selecting $\mathcal{A}=\hat{f}$-Krum which is $(\hat{f},\hat{\kappa})$-robust with $\hat{\kappa}=6\big(\frac{n-\hat{f}}{n-2\hat{f}}\big)$, the composite aggregator has $\kappa'\le\frac{56\hat{f}}{n-2\hat{f}}$, which matches the lower bound $\frac{\hat{f}}{n-2\hat{f}}$ when $f=\hat{f}$ (see the first row of Table \ref{table:kappa}). Therefore, it is natural to consider boosting $\hat{f}$-Krum with NNM. We extend the boosting property of NNM to the case of $f\le\hat{f}$ as follows, which preserves the order of $\kappa'\le\frac{8\hat{f}(\hat{\kappa}+1)}{n-\hat{f}}$ in \citep{allouah2023fixing} when $f=\hat{f}$.
\begin{lemma}\label{lemma:NNM}
For any $0\le f\le \hat{f}<\frac{n}{2}$ and any $(f,\kappa)$-robust aggregator $\mathcal{A}$, the composition $\mathcal{A}\circ(\hat{f}\mhyphen{\rm NNM})$ is an $(f,\kappa')$-robust aggregator with $\kappa'\le\frac{12\hat{f}(\kappa+1)}{n-f}$. 
\end{lemma}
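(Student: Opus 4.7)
I fix $x_1,\ldots,x_n\in\mathbb{R}^d$, a set $S\subseteq[n]$ with $|S|=n-f$, and write $\sigma^2:=\frac{1}{|S|}\sum_{i\in S}\|x_i-\bar x_S\|^2$ and $y_k$ for the $k$-th coordinate of $\hat f\mhyphen\mathrm{NNM}(x_1,\ldots,x_n)$. Let $\bar y_S:=\frac{1}{|S|}\sum_{k\in S}y_k$. The triangle inequality together with the assumed $(f,\kappa)$-robustness of $\mathcal{A}$ applied to $y_1,\ldots,y_n$ gives
\begin{align*}
\|\mathcal{A}(y_1,\ldots,y_n)-\bar x_S\|^2 \le \frac{2\kappa}{|S|}\sum_{k\in S}\|y_k-\bar y_S\|^2 + 2\|\bar y_S-\bar x_S\|^2.
\end{align*}
Since $\sum_{k\in S}\|y_k-\bar y_S\|^2\le\sum_{k\in S}\|y_k-\bar x_S\|^2$ (because $\bar y_S$ is the minimizer of $\sum_{k}\|y_k-\cdot\|^2$) and $\|\bar y_S-\bar x_S\|^2\le\frac{1}{|S|}\sum_{k\in S}\|y_k-\bar x_S\|^2$ by Jensen, the whole lemma reduces to establishing a single ``NNM variance'' inequality
\begin{align*}
\sum_{k\in S}\|y_k-\bar x_S\|^2 \le \frac{C\hat f}{n-f}\sum_{i\in S}\|x_i-\bar x_S\|^2
\end{align*}
for a universal constant $C$; this immediately yields $\kappa'\le\frac{2C\hat f(\kappa+1)}{n-f}$, so $C\le 6$ suffices for the claim.

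\textbf{Proving the NNM variance inequality.} For each $k\in S$, partition the nearest-neighbor index set as $\mathcal{N}_k=A_k\cup B_k$ with $A_k:=\mathcal{N}_k\cap S$ and $B_k:=\mathcal{N}_k\setminus S$, and let $C_k:=S\setminus\mathcal{N}_k$. Counting gives $|A_k|+|B_k|=n-\hat f$ and $|A_k|+|C_k|=n-f$, so $|C_k|-|B_k|=\hat f-f\ge 0$; in particular $|B_k|\le|C_k|$ and an injection $\pi_k:B_k\hookrightarrow C_k$ exists (when $|C_k|=0$ we also have $|B_k|=0$ and there is nothing to bound). Using $\sum_{i\in S}(x_i-\bar x_S)=0$, I will rewrite
\begin{align*}
(n-\hat f)(y_k-\bar x_S)=\sum_{i\in B_k}(x_i-\bar x_S)-\sum_{i\in C_k}(x_i-\bar x_S)=\sum_{b\in B_k}\bigl(x_b-x_{\pi_k(b)}\bigr)-\sum_{c\in C_k\setminus\pi_k(B_k)}(x_c-\bar x_S).
\end{align*}
The second sum has $\hat f-f\le\hat f$ terms, all lying in $S$, so Cauchy--Schwarz bounds its squared norm by $\hat f\sum_{i\in S}\|x_i-\bar x_S\|^2$. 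For the first sum, the nearest-neighbor defining inequality $\|x_b-x_k\|\le\|x_c-x_k\|$ for $b\in\mathcal{N}_k$ and $c\notin\mathcal{N}_k$ yields $\|x_b-x_{\pi_k(b)}\|^2\le 4\|x_{\pi_k(b)}-x_k\|^2$, so its squared norm is at most $4\hat f\sum_{c\in C_k}\|x_c-x_k\|^2\le 4\hat f\sum_{i\in S}\|x_i-x_k\|^2$. The standard identity $\sum_{i\in S}\|x_i-x_k\|^2=|S|\sigma^2+|S|\|x_k-\bar x_S\|^2$, summed over $k\in S$ using $\sum_{k\in S}\|x_k-\bar x_S\|^2=|S|\sigma^2$, together with $|S|/(n-\hat f)\le 2$ (which follows from $\hat f<n/2$) then collapses everything to the desired form $\frac{C\hat f}{n-f}\sum_{i\in S}\|x_i-\bar x_S\|^2$ for a manageable $C$.

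\textbf{Main obstacle.} The combinatorial generalization---the existence of the injection $\pi_k$ and the careful handling of the unpaired set $C_k\setminus\pi_k(B_k)$---is what extends the original Allouah et al.\ (2023) boosting lemma (where $f=\hat f$ forces $\pi_k$ to be a bijection and the unpaired term vanishes) to arbitrary $f\le\hat f$, and it is conceptually straightforward. The delicate part is the bookkeeping of constants: each application of $\|u-v\|^2\le2\|u\|^2+2\|v\|^2$, each Cauchy--Schwarz, and the conversion $(n-\hat f)^{-2}\mapsto(n-f)^{-1}$ each loses a multiplicative factor, so reaching the claimed constant $12$ requires replacing the top-level triangle inequality by a tuned Young's inequality $\|u-v\|^2\le(1+\eta)\|u\|^2+(1+\eta^{-1})\|v\|^2$ and grouping the paired and unpaired $C_k$-contributions jointly rather than bounding each by $\hat f\sigma^2$ separately; all other estimates are routine.
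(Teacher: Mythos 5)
Your overall architecture is sound: the top-level reduction to a single ``NNM variance'' inequality is valid (and is essentially what the paper does, except the paper uses the tuned split $(1+\kappa^{-1})\|\mathcal{A}(y)-\bar y_S\|^2+(1+\kappa)\|\bar y_S-\bar x_S\|^2$ together with the exact identity $\frac{1}{|S|}\sum_{k\in S}\|y_k-\bar y_S\|^2+\|\bar y_S-\bar x_S\|^2=\frac{1}{|S|}\sum_{k\in S}\|y_k-\bar x_S\|^2$, so it only needs the variance inequality with constant $C=12$ rather than your $C=6$). Your route to that inequality is genuinely different from the paper's: you cancel the $S\cap\mathcal{N}_k$ contribution via $\sum_{i\in S}(x_i-\bar x_S)=0$ and pair the out-of-$S$ neighbors $B_k$ with in-$S$ non-neighbors $C_k$ through an injection, whereas the paper keeps the three-set decomposition over $S\cap\mathcal{N}_k$, $\mathcal{N}_k\setminus S$, $S\setminus\mathcal{N}_k$ with explicit coefficients $\frac{\hat f-f}{(n-\hat f)(n-f)}$, $\frac{1}{n-\hat f}$, $\frac{1}{n-f}$ and controls the middle term via the ordered-average inequality $\frac{1}{|\mathcal{N}_k|}\sum_{i\in\mathcal{N}_k}\|x_i-x_k\|^2\le\frac{1}{|S|}\sum_{i\in S}\|x_i-x_k\|^2$. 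Every individual step you state is correct.

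The genuine gap is quantitative: your sketch does not deliver the constant you need, and the fixes you name do not close the distance. Tracking your own bounds, $(n-\hat f)^2\|y_k-\bar x_S\|^2\le 8\hat f\sum_{i\in S}\|x_i-x_k\|^2+2\hat f|S|\sigma^2$; summing over $k\in S$ with $\sum_{i,k\in S}\|x_i-x_k\|^2=2|S|^2\sigma^2$ gives $\sum_{k\in S}\|y_k-\bar x_S\|^2\le 18\hat f\sigma^2\,(|S|/(n-\hat f))^2\le 72\hat f\sigma^2$, i.e.\ $C=72$. Even after replacing the inner $\|u-v\|^2\le2\|u\|^2+2\|v\|^2$ by an optimally tuned Young's inequality you get $(\sqrt8+1)^2\cdot4\approx 59$, still far from the $C\le 12$ that the tuned top-level split requires. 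The two irrecoverable losses in your route are the factor $4$ from $\|x_b-x_{\pi_k(b)}\|\le 2\|x_{\pi_k(b)}-x_k\|$ and the factor $(|S|/(n-\hat f))^2\le 4$ from normalizing by $|\mathcal{N}_k|=n-\hat f$ while comparing against averages over $|S|=n-f$; the paper avoids both by never pairing points and by keeping mixed denominators $(n-\hat f)(n-f)$ throughout. Your argument therefore proves $\kappa'\le\frac{C\hat f(\kappa+1)}{n-f}$ for some universal $C$ (which suffices for the paper's order-optimality conclusions) but not the stated constant $12$, which feeds into the explicit bound $\kappa\le\frac{84\hat f}{n-f-\hat f}$ in Theorem~\ref{thm:agg_fdec_lower}.
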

We will prove Lemma \ref{lemma:NNM} in Appendix \ref{subsec:proof_NNM}.

Then we investigate the following $\hat{f}$-Krum aggregator \citep{blanchard2017machine,allouah2023fixing,yang2025tension}. 
\begin{definition}\label{def:Krum}
The $\hat{f}$-Krum aggregator is defined as the following mapping $(\mathbb{R}^{d})^n\to\mathbb{R}^d$. 
\begin{align}
(\hat{f}\mhyphen{\rm Krum})(x_1,\ldots,x_n)=x_{k^*}, {\rm where~}k^*=\mathop{\arg\min}_{k\in[n]}\sum_{i\in\mathcal{N}_k}\|x_i-x_k\|, \label{eq:Krum}
\end{align}
where $\mathcal{N}_k\subset [n]$ is the set of $(n-\hat{f})$ indexes from $[n]$ such that $\{x_i\}_{i\in\mathcal{N}_k}$ are the $(n-\hat{f})$ nearest neighbors of $x_k$. In other words, $\max_{i\in\mathcal{N}_k}\|x_i-x_k\|\le \min_{j\in[n]\backslash\mathcal{N}_k}\|x_j-x_k\|$.
\end{definition}
In Appendix \ref{subsec:proof_Krum}, we will prove the following property of $\hat{f}$-Krum aggregator when applying to the scenario with $f$ Byzantine clients, which exactly reduces to $\kappa=\frac{6(n-\hat{f})}{n-2\hat{f}}$ obtained in Proposition 3 of \citep{allouah2023fixing} when $f=\hat{f}$. 
\begin{lemma}\label{lemma:Krum}
For any $0\le f\le \hat{f}<\frac{n}{2}$, $\hat{f}$-Krum is an $(f,\kappa)$-robust aggregator with $\kappa=\frac{6(n-f)}{n-f-\hat{f}}$.
\end{lemma}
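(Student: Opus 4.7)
The plan is to adapt the standard Krum analysis (the $f=\hat{f}$ case treated in \citep{allouah2023fixing}) to the asymmetric setting $f\le\hat{f}$. Fix any $S\subset[n]$ with $|S|=n-f$, write $V:=\sum_{i\in S}\|x_i-\overline{x}_S\|^2$, and for each $k\in[n]$ let $\phi(k):=\sum_{i\in\mathcal{N}_k}\|x_i-x_k\|^2$ denote the (squared) Krum score; I treat (\ref{eq:Krum}) as being defined with squared norms, since the target constant $6$ and the comparison with \citep{allouah2023fixing} both point to the standard squared-distance Krum. The strategy is a two-step sandwich: first bound $\phi(k^*)\lesssim V$ via Krum's optimality against a well-chosen benchmark $k^{**}\in S$, then bound $\|x_{k^*}-\overline{x}_S\|^2$ by $[\phi(k^*)+V]/(n-f-\hat{f})$ via a triangle inequality averaged over $S\cap\mathcal{N}_{k^*}$.

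First I pick $k^{**}\in\mathop{\arg\min}_{k\in S}\|x_k-\overline{x}_S\|^2$, so that $\|x_{k^{**}}-\overline{x}_S\|^2\le V/|S|$. To bound $\phi(k^{**})$, I split $\mathcal{N}_{k^{**}}$ into $\mathcal{N}_{k^{**}}\cap S$ and $\mathcal{N}_{k^{**}}\setminus S$. For any $j\in\mathcal{N}_{k^{**}}\setminus S$ and any $i\in S\setminus\mathcal{N}_{k^{**}}$, the nearest-neighbor property of $\mathcal{N}_{k^{**}}$ yields $\|x_{k^{**}}-x_j\|\le\|x_{k^{**}}-x_i\|$, so averaging over $i$ bounds $\|x_{k^{**}}-x_j\|^2$ by the average of $\|x_{k^{**}}-x_i\|^2$ over $i\in S\setminus\mathcal{N}_{k^{**}}$. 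The crucial counting identity is $|S\setminus\mathcal{N}_{k^{**}}|-|\mathcal{N}_{k^{**}}\setminus S|=|S|-|\mathcal{N}_{k^{**}}|=\hat{f}-f\ge 0$; this is precisely where $\hat{f}\ge f$ enters. Summing over $j\in\mathcal{N}_{k^{**}}\setminus S$ therefore absorbs the ``outside'' contribution into the ``inside-but-not-neighbor'' contribution, giving $\phi(k^{**})\le\sum_{i\in S}\|x_{k^{**}}-x_i\|^2=|S|\,\|x_{k^{**}}-\overline{x}_S\|^2+V\le 2V$, by the standard bias-variance identity and the choice of $k^{**}$.

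By Krum's optimality, $\phi(k^*)\le\phi(k^{**})\le 2V$. Next, $|S\cap\mathcal{N}_{k^*}|\ge|S|+|\mathcal{N}_{k^*}|-n=n-f-\hat{f}$, which is positive since $f,\hat{f}<n/2$. For each $j\in S\cap\mathcal{N}_{k^*}$ I apply $\|x_{k^*}-\overline{x}_S\|^2\le 2\|x_{k^*}-x_j\|^2+2\|x_j-\overline{x}_S\|^2$; averaging over $j\in S\cap\mathcal{N}_{k^*}$ and using $\sum_{j\in S\cap\mathcal{N}_{k^*}}\|x_{k^*}-x_j\|^2\le\phi(k^*)$ and $\sum_{j\in S\cap\mathcal{N}_{k^*}}\|x_j-\overline{x}_S\|^2\le V$ yields
\[\|x_{k^*}-\overline{x}_S\|^2\le\frac{2\phi(k^*)+2V}{n-f-\hat{f}}\le\frac{6V}{n-f-\hat{f}}=\frac{6(n-f)}{n-f-\hat{f}}\cdot\frac{V}{|S|},\]
which is exactly $\kappa=6(n-f)/(n-f-\hat{f})$.

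The main obstacle is the cardinality bookkeeping in the benchmark step: the nearest-neighbor substitution only absorbs $\sum_{j\in\mathcal{N}_{k^{**}}\setminus S}\|x_{k^{**}}-x_j\|^2$ without blowup because $|\mathcal{N}_{k^{**}}\setminus S|\le|S\setminus\mathcal{N}_{k^{**}}|$, and this inequality holds only under $\hat{f}\ge f$. This explains structurally why the bound degrades as $\hat{f}$ grows and why any underestimation $\hat{f}<f$ breaks the argument entirely, consistent with Theorem \ref{thm:agg_finc_lower}. A secondary technical point is the apparent mismatch between (\ref{eq:Krum}) and the squared-distance Krum score used above; if the unsquared sum is what is intended, an additional Cauchy--Schwarz step is needed to pass from $\sum\|x_i-x_k\|$ to $\sum\|x_i-x_k\|^2$, which would only change the numerical constant and not the order $\hat{f}/(n-f-\hat{f})$.
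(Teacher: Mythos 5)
Your proof is correct and follows essentially the same route as the paper's: both halves match --- the upper bound $\phi(k^*)\le 2\sum_{i\in S}\|x_i-\overline{x}_S\|^2$ obtained by comparing the Krum score of $k^*$ against a benchmark inside $S$ (you compare to the single best point $k^{**}$ via an absorption/counting argument, while the paper averages over all $k\in S$ using its Eq.~(\ref{eq:avg_VS_mu2}); the two are interchangeable and give the same constant), followed by the identical lower-bound step using $|S\cap\mathcal{N}_{k^*}|\ge n-f-\hat{f}$ and the triangle inequality. Your side remark about the squared versus unsquared score is also a fair catch: Definition~\ref{def:Krum} states the score as $\sum_{i\in\mathcal{N}_k}\|x_i-x_k\|$, yet the paper's own proof (like yours) works with $\min_{k}\sum_{i\in\mathcal{N}_k}\|x_i-x_k\|^2$, so the definition and the proof are not literally consistent as written, though the fix is only a matter of which convention to state.
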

Based on Lemmas \ref{lemma:NNM} and \ref{lemma:Krum}, for any $0\le f\le \hat{f}<\frac{n}{2}$, $(\hat{f}\mhyphen{\rm Krum})\circ(\hat{f}\mhyphen{\rm NNM})$ is an $(f,\kappa)$-robust aggregator with $$\kappa=\frac{12\hat{f}}{n-f}\Big(\frac{6(n-f)}{n-f-\hat{f}}+1\Big)\le \frac{12\hat{f}}{n-f}\cdot\Big(\frac{6(n-f)}{n-f-\hat{f}}+\frac{n-f}{n-f-\hat{f}}\Big)=\frac{84\hat{f}}{n-f-\hat{f}}.$$
This concludes the proof. 




\subsection{Summary: Trade-off in Estimating $f$ for Aggregators}
We have analyzed the aggregation error in this section when applying $(\hat{f},\hat{\kappa})$-robust aggregator to a setting with $f$ Byzantine clients. Theorem \ref{thm:agg_finc_lower} shows we should by no means underestimate the number of Byzantine clients (i.e. $\hat{f}<f$) since that can lead to arbitrarily poor performance. While Theorem \ref{thm:agg_fdec_lower} shows that non-underestimation ($\hat{f}\ge f$) can tackle this problem, the order-optimal lower bound of robustness coefficient $\kappa\ge\frac{\hat{f}}{n-f-\hat{f}}$ increases (i.e., increased aggregation error) as $\hat{f}$ increases. Therefore, it is recommended to reduce the overestimation amount $\hat{f}-f\ge 0$ as much as possible, and the exact estimation $\hat{f}=f$ yields the optimal performance. This indicates a fundamental trade-off in estimating $f$, while a highly robust aggregator with larger $\hat{f}$ can tackle a wider range of settings for any $f\le \hat{f}$, the aggregation error is also larger for any fixed $f$. The next section will prove a similar order-optimal bound and trade-off on federated learning. 

\section{Convergence Analysis}
To analyze the convergence of Algorithm \ref{alg:FedAvg}, we adopt the following standard assumptions on the federated optimization problem (\ref{eq:HFL}) below. 
\begin{assum}[Loss bound]\label{assum:l*}
The objective function (\ref{eq:HFL}) admits a finite minimum value denoted as $\ell^*\overset{\rm def}{=}\inf_{w\in\mathbb{R}^d}\ell_{\mathcal{H}}(w)\in\mathbb{R}$. 
\end{assum}
\begin{assum}[Smoothness]\label{assum:smooth}
Each individual function $\ell_k$ is $L$-smooth for some $L>0$, that is, for any $w,w'\in\mathbb{R}^d$, we have $\|\nabla \ell_k(w')-\nabla \ell_k(w)\|\le L\|w'-w\|$. 
\end{assum}
\begin{assum}[Heterogeneity bound]\label{assum:hetero}
There exists a constant $G>0$ such that
\begin{align}
\frac{1}{|\mathcal{H}|}\sum_{k\in\mathcal{H}}\|\nabla \ell_k(w)-\nabla\ell_{\mathcal{H}}(w)\|^2\le G^2.\label{eq:hetero}
\end{align}
\end{assum}
\begin{assum}[Polyak-{\L}ojasiewicz (P\L) condition]\label{assum:PL}
There exists a constant $\mu>0$ such that $\ell_{\mathcal{H}}$ is $\mu$-PL gradient dominant, that is, $\ell_{\mathcal{H}}(w)-\ell^*\le\frac{1}{2\mu}\|\nabla\ell_{\mathcal{H}}(w)\|^2$ for any $w\in\mathbb{R}^d$. 
\end{assum}
Assumptions \ref{assum:l*}-\ref{assum:hetero} are popular in distributed and federated learning \citep{allouah2023fixing,errami2024tolerating,allouah2024byzantine,yang2025tension,otsuka2025delayed}. In particular, a larger $G^2$ in Assumption \ref{assum:hetero} means the honest clients have more heterogeneous data, which makes the federated optimization problem (\ref{eq:HFL}) more challenging. The notion of P\L condition (Assumption \ref{assum:PL}) proposed by \citep{Polyak1963} is widely used in nonconvex optimization to guarantee global convergence \citep{karimi2016linear,chakrabarti2024methodology,yang2025tension}. With these assumptions, we will analyze the convergence rate of Algorithm \ref{alg:FedAvg} with an $(\hat{f},\hat{\kappa})$-robust aggregator on federated learning with $f$ Byzantine clients. We will discuss in two cases, underestimation ($\hat{f}<f$) and non-underestimation ($\hat{f}\ge f$).  

\subsection{Divergence For Underestimation ($\hat{f}<f$)}
\begin{thm}\label{thm:FedAvg_poor}
For any $0\le \hat{f}<f<\frac{n}{2}$, there exist an $(\hat{f},\hat{\kappa})$-robust aggregator $\mathcal{A}$, a set $\mathcal{H}\subset [n]$ with size $|\mathcal{H}|=n-f$, Byzantine clients' strategies and loss functions $\{\ell_i\}_{i\in\mathcal{H}}$ satisfying Assumptions \ref{assum:l*}-\ref{assum:PL} such that when implementing Algorithm \ref{alg:FedAvg} with the aggregator $\mathcal{A}$,  any initialization $w_0$ and constant stepsize $\gamma_t=\gamma>0$, the generated sequence $w_t$ diverges as follows as $T\to+\infty$.
\begin{align}
\frac{1}{T}\sum_{t=0}^{T-1}\|\nabla \ell_{\mathcal{H}}(w_t)\|^2\to+\infty\label{eq:grad_poor}
\end{align}  
\begin{align}
\ell_{\mathcal{H}}(w_T)-\ell^*\to+\infty\label{eq:gap_poor}
\end{align}
\end{thm}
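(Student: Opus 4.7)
The plan is to construct an explicit scalar instance on which the CWTM aggregator $TM_{\hat{f}/n}$ is forced to diverge, by reusing exactly the ``extreme Byzantine values survive the trim'' mechanism already exploited in Theorem~\ref{thm:agg_finc_lower}. Take $d=1$ and give every honest client the common loss $\ell_k(w)=\tfrac12 w^2$, so $\ell_{\mathcal{H}}(w)=\tfrac12 w^2$ automatically satisfies Assumptions~\ref{assum:l*}-\ref{assum:PL} with $\ell^*=0$, $L=\mu=1$, and heterogeneity $G=0$. Pick $\mathcal{A}=TM_{\hat{f}/n}$, which is $(\hat{f},\hat{\kappa})$-robust by Proposition~2 of \citep{allouah2023fixing}. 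Under this choice, the $H$ inner gradient steps give $w_{t,H}^{(k)}=(1-\gamma)^H w_t$, so every honest client uploads the identical delta $\rho\, w_t$ with $\rho:=(1-\gamma)^H-1$.

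Next I would specify the adversarial rule: all $f$ Byzantine clients upload the common value $B_t=K w_t+\epsilon\,\mathrm{sign}(w_t)$ (with $\mathrm{sign}(0):=1$), where $K>0$ is a large constant allowed to depend on $(\gamma,H)$ under the standard white-box threat model, and $\epsilon>0$ is a tiny kick that rules out the degenerate orbit $w_t\equiv 0$. Choosing $K$ large enough to make $B_t-\rho w_t$ share the sign of $w_t$, the CWTM removes $\hat{f}$ Byzantine values from one tail and $\hat{f}$ honest values from the other. The surviving $n-f-\hat{f}$ honest and, crucially, $f-\hat{f}>0$ Byzantine entries yield the aggregate
$$y_t=\alpha\rho w_t+\beta\bigl(K w_t+\epsilon\,\mathrm{sign}(w_t)\bigr),\qquad \alpha:=\tfrac{n-f-\hat{f}}{n-2\hat{f}},\ \beta:=\tfrac{f-\hat{f}}{n-2\hat{f}}.$$
Here $\beta>0$ precisely because $f>\hat{f}$, which is the structural source of the divergence. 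The server update becomes $w_{t+1}=C w_t+\beta\epsilon\,\mathrm{sign}(w_t)$ with $C:=1+\alpha\rho+\beta K$.

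Picking $K$ so that $C\ge 3$ (possible for any $(\gamma,H)$ since this is equivalent to $\beta K\ge 2-\alpha\rho$) turns the recursion into $|w_{t+1}|\ge C|w_t|+\beta\epsilon$ regardless of $\mathrm{sign}(w_t)$; the $\beta\epsilon$ kick handles the $w_0=0$ case by producing $|w_1|\ge\beta\epsilon>0$. Iterating gives $|w_t|\ge C^t\max(|w_0|,\beta\epsilon/C)$, so $|w_t|\to\infty$ geometrically for every initialization. Since $\nabla\ell_{\mathcal{H}}(w_t)=w_t$ and $\ell_{\mathcal{H}}(w_t)-\ell^*=\tfrac12 w_t^2$, both $\tfrac1T\sum_{t=0}^{T-1}\|\nabla\ell_{\mathcal{H}}(w_t)\|^2$ and $\ell_{\mathcal{H}}(w_T)-\ell^*$ grow at least like $C^{2T}/T$, establishing (\ref{eq:grad_poor}) and (\ref{eq:gap_poor}).

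The main technical obstacle is keeping the CWTM ordering argument airtight across both signs of $w_t$ and across all $\gamma>0$ simultaneously. Inserting $\mathrm{sign}(w_t)$ into $B_t$ and allowing $K=K(\gamma,H)$ absorbs both issues, and one should double-check the ordering inequality $B_t-\rho w_t$ having the sign of $w_t$ survives once $|w_t|$ is small (this is where the additive $\epsilon$ is used). The awkward regime is $\gamma>2$ with $H$ odd, where $\rho$ itself can be very large in magnitude; there one can shortcut the argument by letting the Byzantine clients simply mimic the honest upload, $B_t=\rho w_t$, so the aggregator returns $\rho w_t$ and the server iteration reduces to $w_{t+1}=(1-\gamma)^H w_t$, which already diverges because $|(1-\gamma)^H|>1$. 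Combining the two strategies by cases on $\gamma$ yields the divergence for every $\gamma>0$.
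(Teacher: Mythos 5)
Your construction is essentially the paper's own: identical quadratic losses $\ell_k(w)=\tfrac12 w^2$ (so Assumptions~\ref{assum:l*}--\ref{assum:PL} hold trivially), the CWTM aggregator $TM_{\hat f/n}$, and Byzantine uploads extreme enough that $f-\hat f>0$ of them survive the trim and drag the aggregate --- the only difference being that the paper's payload $n|(1-\gamma)^Hw_t|+t$ forces $w_{t+1}\ge t(f-\hat f)/(n-2\hat f)$ (polynomial escape), whereas your multiplicative payload $Kw_t+\epsilon\,\mathrm{sign}(w_t)$ gives geometric escape; both suffice for (\ref{eq:grad_poor}) and (\ref{eq:gap_poor}). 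One small caution: your fallback branch for $\gamma>2$, $H$ odd (Byzantine clients mimicking honest ones, so $w_{t+1}=(1-\gamma)^Hw_t$) fails at the initialization $w_0=0$, where the iterates stay at zero and nothing diverges; but the branch is also unnecessary, since your main strategy with $K>\max\bigl(\rho,\ (2-\alpha\rho)/\beta\bigr)$ is a finite choice for every $\gamma>0$ and $H$, and the $\epsilon\,\mathrm{sign}(w_t)$ kick already handles $w_0=0$ there --- so just drop the case split and the proof is complete.
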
 
\noindent\textbf{Remark: } Theorem \ref{thm:FedAvg_poor} indicates that if the aggregator is only robust to $\hat{f}$ Byzantine clients that is fewer than the actual number $f$ of Byzantine clients, then Algorithm \ref{alg:FedAvg} with any initialization and stepsize can diverge in some federated learning problems. Therefore, we should always guarantee the non-underestimation condition that $\hat{f}\ge f$. 

\subsection{Convergence Rate For Non-underestimation ($\hat{f}\ge f$)}
\begin{thm}\label{thm:FedAvg_lower}
For any $0\le f\le \hat{f}<\frac{n}{2}$ and any $(\hat{f},\hat{\kappa})$-robust aggregator $\mathcal{A}$, there exist $\mathcal{H}\subset [n]$ with size $|\mathcal{H}|=n-f$, Byzantine clients' strategies and loss functions $\{\ell_i\}_{i\in\mathcal{H}}$ satisfying Assumptions \ref{assum:l*}-\ref{assum:PL} such that for any initialization $w_0$ and constant stepsize $\gamma_t=\gamma>0$, the sequence $w_t$ generated from Algorithm \ref{alg:FedAvg} with aggregator $\mathcal{A}$ either does not change over iteration (i.e., $w_t\equiv w_0$) or satisfies the following convergence lower bounds.
\begin{align}
\mathop{\lim\sup}_{T\to\infty}\frac{1}{T}\sum_{t=0}^{T-1}\|\nabla \ell_{\mathcal{H}}(w_t)\|^2\ge\frac{\hat{f}G^2}{n-f-\hat{f}}\label{eq:grad_ge}
\end{align} 
\begin{align}
\mathop{\lim\sup}_{T\to\infty}\ell_{\mathcal{H}}(w_T)-\ell^*\ge\frac{\hat{f}G^2}{2\mu(n-f-\hat{f})}\label{eq:gap_ge}
\end{align}
\end{thm}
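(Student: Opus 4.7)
The plan is to turn the aggregation-error lower bound of Item~\ref{item:kappa_ge} of Theorem~\ref{thm:agg_fdec_lower} into a convergence lower bound by exhibiting a one-dimensional strongly convex instance on which an adaptive Byzantine strategy forces \emph{every} $(\hat f,\hat\kappa)$-robust aggregator to issue the same biased update at every round. I will work in $d=1$, set $L=\mu$, and split the honest set $\mathcal{H}$ (of size $n-f$) into a ``core'' group of $p:=n-f-\hat f$ clients with $\ell_k(w)=\tfrac{\mu}{2} w^2$ and an ``edge'' group of $\hat f$ clients with $\ell_k(w)=\tfrac{\mu}{2}(w-a)^2$, choosing $a:=G(n-f)/\bigl(\mu\sqrt{\hat f(n-f-\hat f)}\bigr)$. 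A direct variance computation shows this saturates Assumption~\ref{assum:hetero} at exactly $G^2$, while Assumptions~\ref{assum:l*}, \ref{assum:smooth}, \ref{assum:PL} hold automatically for the $\mu$-strongly-convex quadratic $\ell_\mathcal{H}(w)=\tfrac{\mu}{2}(w-\bar z)^2+\mathrm{const}$ with honest mean $\bar z=\hat f a/(n-f)$.

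Next I would unroll the $H$ local steps in~(\ref{eq:localGD}) to obtain, for every honest $k$, the closed form $w_t^{(k)}-w_t=\alpha(z_k-w_t)$ with $\alpha:=1-(1-\gamma\mu)^H$. Hence all $p$ core clients upload the identical delta $-\alpha w_t$ while all $\hat f$ edge clients upload $\alpha(a-w_t)$. The prescribed Byzantine strategy is for each of the $f$ Byzantine clients to \emph{copy the core group} and also upload $-\alpha w_t$ at round $t$. The aggregator then receives $n-\hat f$ identical inputs equal to $-\alpha w_t$ (the $p$ core plus the $f$ Byzantine) together with $\hat f$ inputs equal to $\alpha(a-w_t)$. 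Taking $S$ to be the $n-\hat f$ identical inputs gives $|S|=n-\hat f$, $\bar x_S=-\alpha w_t$, and zero internal variance, so the $(\hat f,\hat\kappa)$-robustness inequality~(\ref{eq:fk_agg}) collapses and forces $\mathcal{A}\bigl(\{w_t^{(k)}-w_t\}_{k=1}^n\bigr)=-\alpha w_t$ for every $(\hat f,\hat\kappa)$-robust $\mathcal{A}$.

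Substituting this into the server update~(\ref{eq:Fed_agg}) reduces the algorithm to the deterministic scalar recursion $w_{t+1}=(1-\alpha)w_t$, so $w_t=(1-\alpha)^t w_0$. I then finish by a case analysis on $\alpha$ and $w_0$. If $\alpha=0$ (only possible when $\gamma\mu=2$ with $H$ even) or $w_0=0$, then $w_t\equiv w_0$, activating the ``does not change'' branch of the statement. Otherwise the sequence either contracts to $0$ (when $|1-\alpha|<1$), oscillates as $\pm w_0$ (when $\alpha=2$, yielding time-averaged $(w_t-\bar z)^2$ equal to $w_0^2+\bar z^2$), or diverges to infinity (when $|1-\alpha|>1$); in all three subcases both $\limsup_T\tfrac{1}{T}\sum_{t=0}^{T-1}(w_t-\bar z)^2$ and $\limsup_T(w_T-\bar z)^2$ are at least $\bar z^2$. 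Combining this with $\nabla\ell_\mathcal{H}(w)=\mu(w-\bar z)$, $\ell_\mathcal{H}(w)-\ell^*=\tfrac{\mu}{2}(w-\bar z)^2$, and the identity $\bar z^2=\hat f G^2/[\mu^2(n-f-\hat f)]$ that follows from the choice of $a$ makes both stated lower bounds drop out.

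The main obstacle is the aggregator-collapse step above: one has to realize that the \emph{right} subset $S$ on which to invoke $(\hat f,\hat\kappa)$-robustness has size exactly $n-\hat f$, and that the only way to build a zero-variance subset of this size is to combine the $n-f-\hat f$ core honest clients with \emph{all} $f$ Byzantines. This is precisely where the factor $\hat f$ (rather than $f$) enters, and it is the only step where the $(\hat f,\hat\kappa)$-robust hypothesis is used in a fundamentally non-slack way; everything else is a scalar recursion and arithmetic to verify Assumption~\ref{assum:hetero}.
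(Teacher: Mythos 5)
Your construction is, up to an affine reparametrization (you place the $\hat f$ shifted honest clients at minimizer $+a$ and normalize $L=\mu$, the paper uses $cG(w+1)^2$ versus $cGw^2$ with $c=\frac{n-f}{2\sqrt{\hat f(n-f-\hat f)}}$), exactly the paper's proof: same scalar quadratic losses saturating Assumption~\ref{assum:hetero}, same Byzantine strategy of mimicking the majority honest clients so that a zero-variance subset of size $n-\hat f$ pins down every $(\hat f,\hat\kappa)$-robust aggregator's output, the same linear recursion $w_{t+1}=(1-\gamma\mu)^Hw_t$, and the same four-way case analysis on contraction, fixed point, oscillation, and divergence. The proposal is correct and takes essentially the same approach.
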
  
\noindent\textbf{Remark: } In most cases, the heterogeneity $G^2>0$. Then regardless of the hyperparameter choices, Algorithm \ref{alg:FedAvg} with an $(\hat{f},\hat{\kappa})$-robust aggregator cannot converge to a stationary or optimal point of some objective functions, since the convergence metric of gradients and function value are lower bounded by $\frac{\hat{f}G^2}{n-f-\hat{f}}>0$ and $\frac{\hat{f}G^2}{2\mu(n-f-\hat{f})}>0$ respectively as shown above. These lower bounds increase as $\hat{f}$ increases from $f$. Later, we will show that these lower bounds are tight since their orders match the upper bounds in the upcoming Theorem \ref{thm:FedAvg_upper}.

\noindent\textbf{Proof Sketch of Theorem \ref{thm:FedAvg_lower}:} Select the following loss functions with scalar input $w\in\mathbb{R}$, which can be verified to satisfy Assumptions \ref{assum:l*}-\ref{assum:PL}. 
\begin{align}
\ell_k(w)=\left\{
\begin{aligned}
&cG(w+1)^2, k=1,2,\ldots,\hat{f}\\
&cGw^2, k=\hat{f}+1,\hat{f}+2,\ldots,n
\end{aligned}
\right.,
\end{align}
where $c=\frac{n-f}{2\sqrt{\hat{f}(n-f-\hat{f})}}$. Suppose $\mathcal{H}=[n-f]=\{1,2,\ldots,n-f\}$ and the Byzantine clients adopt the same honest behavior as the honest clients, i.e., upload the result after $H$ local gradient descent updates (\ref{eq:localGD}). Then the global model updates as follows. 
$$w_{t+1}=\Gamma^Hw_t,~{\rm where}~\Gamma=1-2cG\gamma.$$
Then we can prove Theorem \ref{thm:FedAvg_lower} in four cases of the hyperparameters $\gamma$ and $H$ that respectively satisfy $|\Gamma^H|<1$, $\Gamma^H=1$, $\Gamma^H=-1$ and $|\Gamma^H|>1$. See the whole proof in Appendix \ref{sec:proof_FedAvg_lower}. 
\begin{thm}[Upper bound]\label{thm:FedAvg_upper}
Suppose Assumptions \ref{assum:l*}-\ref{assum:hetero} hold. Apply Algorithm \ref{alg:FedAvg} with an $(f,\kappa)$-aggregator and stepsize $\gamma=\frac{1}{c'LHT^{1/3}}$ ($c'=\max(4\sqrt{2},\sqrt{384\kappa})$) to the federated optimization problem (\ref{eq:HFL}) with $f$ Byzantine clients. The algorithm output $\{w_t\}_{t=0}^{T-1}$ satisfies the following convergence rate. 
\begin{align}
\frac{1}{T}\sum_{t=0}^{T-1}\|\nabla\ell_{\mathcal{H}}(w_t)\|^2 \le& \frac{16c'LH[\ell_{\mathcal{H}}(w_0)-\ell^*]+G^2}{T^{2/3}}+90\kappa G^2.\label{eq:grad_le}
\end{align}
Furthermore, under Assumption \ref{assum:PL} (i.e., $\ell_{\mathcal{H}}$ satisfies the P{\L} condition), we can select stepsize $\gamma=\frac{1}{c'LHT^{1-\beta}}$ for any $\beta\in(0,1)$ which yields the following convergence rate. 
\begin{align}
\ell_{\mathcal{H}}(w_T)-\ell^* \le&\exp\Big(-\frac{\mu T^{\beta}}{8c'L}\Big)[\ell_{\mathcal{H}}(w_0)-\ell^*]+\frac{G^2}{2\mu T^{2-2\beta}}+\frac{45\kappa G^2}{\mu}.\label{eq:gap_le}
\end{align}
\end{thm}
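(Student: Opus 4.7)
The plan is a standard smoothness-descent analysis at the communication-round level, decoupled into a local-drift piece and an aggregator-error piece. Write $\Delta_t^{(k)}:=w_t^{(k)}-w_t$, $\bar{\Delta}_t:=\tfrac{1}{|\mathcal{H}|}\sum_{k\in\mathcal{H}}\Delta_t^{(k)}$, and $\Delta_t:=\mathcal{A}(\{\Delta_t^{(k)}\}_{k=1}^n)$, so that $w_{t+1}=w_t+\Delta_t$. Since $|\mathcal{H}|=n-f$, Definition~\ref{def:fk_agg} applied with $S=\mathcal{H}$ gives $\|\Delta_t-\bar{\Delta}_t\|^2\le\tfrac{\kappa}{|\mathcal{H}|}\sum_{k\in\mathcal{H}}\|\Delta_t^{(k)}-\bar{\Delta}_t\|^2$, while for each honest $k$ one has $\Delta_t^{(k)}=-\gamma\sum_{h=0}^{H-1}\nabla\ell_k(w_{t,h}^{(k)})$.

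First I would establish a local-drift lemma: because $\gamma LH\le 1/c'\le 1/(4\sqrt{2})$, unrolling the inner recursion and using $L$-smoothness yields $\sum_{h=0}^{H-1}\|w_{t,h}^{(k)}-w_t\|^2\le C\gamma^2H^3\|\nabla\ell_k(w_t)\|^2$; averaging over $k\in\mathcal{H}$ and applying Assumption~\ref{assum:hetero} bounds this in terms of $\|\nabla\ell_{\mathcal{H}}(w_t)\|^2+G^2$. This lets me write $\bar{\Delta}_t=-\gamma H\nabla\ell_{\mathcal{H}}(w_t)+e_t$ with a drift error $\|e_t\|^2$ of order $\gamma^4L^2H^4(\|\nabla\ell_{\mathcal{H}}(w_t)\|^2+G^2)$, while $\tfrac{1}{|\mathcal{H}|}\sum_{k\in\mathcal{H}}\|\Delta_t^{(k)}-\bar{\Delta}_t\|^2$ decomposes into $2\gamma^2H^2G^2$ from honest heterogeneity plus a drift remainder of the same small order. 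The robustness inequality then produces $\|\Delta_t-\bar{\Delta}_t\|^2\le 2\kappa\gamma^2H^2G^2+\kappa\cdot(\text{drift})$.

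Next, apply the descent lemma $\ell_{\mathcal{H}}(w_{t+1})\le\ell_{\mathcal{H}}(w_t)+\langle\nabla\ell_{\mathcal{H}}(w_t),\Delta_t\rangle+\tfrac{L}{2}\|\Delta_t\|^2$, split $\Delta_t=\bar{\Delta}_t+(\Delta_t-\bar{\Delta}_t)$, and use Young's inequality with weight proportional to $\gamma H$. The hypothesis $c'\ge\sqrt{384\kappa}$ (together with $c'\ge 4\sqrt{2}$) is exactly what is needed to absorb the $\tfrac{L}{2}\|\Delta_t\|^2$ term and the drift--robustness cross terms into half of the main $-\gamma H\|\nabla\ell_{\mathcal{H}}(w_t)\|^2$ contribution, yielding a per-round recursion
$$\ell_{\mathcal{H}}(w_{t+1})-\ell^*\le\ell_{\mathcal{H}}(w_t)-\ell^*-\tfrac{\gamma H}{4}\|\nabla\ell_{\mathcal{H}}(w_t)\|^2+C_1\gamma^3L^2H^3G^2+C_2\gamma H\kappa G^2.$$
Telescoping over $t=0,\dots,T-1$, dividing by $\gamma HT/4$, and substituting $\gamma=1/(c'LHT^{1/3})$ turns the three right-hand terms into a $\Theta\!\bigl(c'LH[\ell_{\mathcal{H}}(w_0)-\ell^*]/T^{2/3}\bigr)$ contribution, a $\Theta(\gamma^2L^2H^2G^2)=\Theta(G^2/T^{2/3})$ contribution (the $c'^2\ge 32$ condition absorbs the constant), and a $\Theta(\kappa G^2)$ irreducible floor, which matches (\ref{eq:grad_le}) after bookkeeping. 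For (\ref{eq:gap_le}) I feed the P{\L} inequality $\|\nabla\ell_{\mathcal{H}}(w_t)\|^2\ge 2\mu[\ell_{\mathcal{H}}(w_t)-\ell^*]$ back into the same recursion to obtain the geometric contraction $1-\mu\gamma H/4\le\exp(-\mu\gamma H/4)$; iterating with $\gamma=1/(c'LHT^{1-\beta})$ decays the initial gap as $\exp(-\mu T^\beta/(8c'L))$, while the two inhomogeneous terms sum as geometric series to $O(G^2/(\mu T^{2-2\beta}))$ and $O(\kappa G^2/\mu)$, with constants tuned to give exactly $45\kappa G^2/\mu$.

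The hard part is the tight coupling between local drift and aggregator error: the $(f,\kappa)$-robustness inequality charges $\kappa$ times the honest-client empirical variance of the \emph{full} $H$-step increments $\Delta_t^{(k)}$, not of single gradients, and the drift itself contaminates both $\bar{\Delta}_t$ and the deviations $\Delta_t^{(k)}-\bar{\Delta}_t$. Balancing these so that neither the $\tfrac{L}{2}\|\Delta_t\|^2$ term nor the $\kappa$-scaled drift cross terms eat up a fixed fraction of the main descent is precisely what forces the stepsize restriction $\gamma LH\le 1/\sqrt{384\kappa}$; once it holds, the irreducible $\kappa G^2$ floor survives telescoping without amplification by $H$ or $L^2$, which is what makes the $\Theta(\kappa G^2)$ coefficient match the $\Omega(\hat{f}G^2/(n-f-\hat{f}))$ lower bound from Theorem~\ref{thm:FedAvg_lower}.
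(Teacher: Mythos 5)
Your proposal is correct and follows essentially the same route as the paper's proof: the same local-drift bound on $\sum_{h}\|w_{t,h}^{(k)}-w_t\|^2$ (the paper's Lemma \ref{lemma:Vtk}), the same two-part decomposition of the update into the honest average $-\gamma H\nabla\ell_{\mathcal{H}}(w_t)+(\text{drift})$ plus the $(f,\kappa)$-robustness error charged against the honest empirical variance of the $H$-step increments (the paper's Lemma \ref{lemma:Delta_t}), the same smoothness-descent step with Young's inequality at weight $\gamma H$ and absorption via $L^2\gamma^2H(H-1)\le\min(1/32,1/(384\kappa))$, and the same telescoping / P{\L}-contraction endgame with $1-x\le e^{-x}$. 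Only the bookkeeping constants differ in intermediate steps, which you acknowledge.
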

\noindent\textbf{Remark:} As $T\to+\infty$, the upper bounds above respectively converge to $90\kappa G^2$ and $\frac{45\kappa G^2}{\mu}$. Moreover, if Algorithm \ref{alg:FedAvg} uses an $(\hat{f},\hat{\kappa})$-robust aggregator with $\hat{f}\ge f$, which is also $(f,\kappa)$-robust that can achieve the order-optimal upper bound $\kappa\le\mathcal{O}\big(\frac{\hat{f}}{n-f-\hat{f}}\big)$ by Theorem \ref{thm:agg_fdec_lower}, then the upper bounds (\ref{eq:grad_le}) and (\ref{eq:gap_le}) respectively converge to $\mathcal{O}\big(\frac{\hat{f}G^2}{n-f-\hat{f}}\big)$ and $\mathcal{O}\big(\frac{\hat{f}G^2}{\mu(n-f-\hat{f})}\big)$, which are tight since they match the orders of the lower bounds in Theorem \ref{thm:FedAvg_lower}. 

\subsection{Summary: Trade-off in Estimating $f$ for Federated Learning}
We have analyzed the convergence of Algorithm \ref{alg:FedAvg} with an $(\hat{f},\hat{\kappa})$-robust aggregator on federated learning with $f$ Byzantine clients. Theorem \ref{thm:FedAvg_poor} indicates that we should always avoid underestimation ($\hat{f}<f$) as that can lead to divergence. When $\hat{f}\ge f$, Theorem \ref{thm:FedAvg_lower} provides convergence lower bounds that match the orders of the upper bounds in Theorem \ref{thm:FedAvg_upper}. These results are analogous to those for the robustness coefficient $\kappa$ of the aggregator analyzed in Section \ref{sec:agg_err}. We summarize all these main theoretical results in Table \ref{table:tight}, which shows that the order-optimal bounds for both $\kappa$ and the two federated learning convergence metrics are proportional to $\frac{\hat{f}}{n-f-\hat{f}}$ which increases as $\hat{f}$ increases from $f$. Therefore, there is a fundamental trade-off in estimating $f$: Algorithm \ref{alg:FedAvg} with aggregators robust to more Byzantine clients has degraded performance in terms of both aggregation error and algorithm convergence, when there are actually not that many Byzantine clients.  

\begin{table*}[h]
\caption{Summary of our main theoretical results. As we apply an $(\hat{f},\hat{\kappa})$-robust aggregator to federated learning with $f$ Byzantine clients, we show three performance metrics: robust coefficient $\kappa$ of the aggregator (also $(f,\kappa)$-robust) and the two convergence metrics for Algorithm \ref{alg:FedAvg}, under Assumptions \ref{assum:l*}-\ref{assum:PL}. These metrics can be arbitrarily poor when $\hat{f}<f$, and have the following lower bounds that match the order of the corresponding upper bounds when $\hat{f}\ge f$. }\label{table:tight}
\centering
\begin{tabular}{cccc}
\hline
 & $\kappa$ & $\frac{1}{T}\sum_{t=0}^{T-1}\|\nabla\ell_{\mathcal{H}}(w_t)\|^2$ & $\ell_{\mathcal{H}}(w_T)-\ell^*$ \\ \hline
Underestimation & Possibly non-exist & Possibly $\to+\infty$ & Possibly $\to+\infty$ \\ 
($\hat{f}<f$)  & (Theorem \ref{thm:agg_finc_lower}) & (Theorem \ref{thm:FedAvg_poor}) & (Theorem \ref{thm:FedAvg_poor}) \\ \hline
Non-underestimation & $\mathcal{O}\big(\frac{\hat{f}}{n-f-\hat{f}}\big)$ & $\mathcal{O}\big(\frac{\hat{f}G^2}{n-f-\hat{f}}\big)$ & $\mathcal{O}\big(\frac{\hat{f}G^2}{\mu(n-f-\hat{f})}\big)$ \\
($\hat{f}\ge f$) & (Theorem \ref{thm:agg_fdec_lower}) & (Theorems \ref{thm:FedAvg_lower}-\ref{thm:FedAvg_upper}) & (Theorems \ref{thm:FedAvg_lower}-\ref{thm:FedAvg_upper}) \\ \hline
\end{tabular}
\end{table*}
\noindent\textbf{Comparison with Related Works:} Two recent works have obtained results that are similar to part of our results. \cite{allouah2024byzantine} obtains a near-optimal convergence rate of Byzantine-robust federated learning algorithm where a random subset of $\hat{n}$ clients participate in each communication round, which relies on the effect of the estimated number $\hat{f}$\footnote{\cite{allouah2024byzantine} uses $b$ to denote the true number of Byzantine clients and $\hat{b}$ to denote the estimation of $b$. We replace them with $f$ and $\hat{f}$ respectively.} of Byzantine clients in this subset. However, their convergence requires overestimation of the fraction of Byzantine clients, i.e., $\frac{\hat{f}}{\hat{n}}>\frac{f}{n}$, while the cases of underestimation and exact estimation are not studied. In addition, they obtain the convergence gap $\kappa G^2$, while the optimal lower and upper bound on $\kappa$ is not studied. \cite{yang2025tension} 
studies distributed learning with only $H=1$ local gradient update and no Byzantine clients ($f=0$), a small special case of our setting. They found that the performance degrades with $\hat{f}$, which fits our results of non-underestimation in this special case. 


\section{Experiments}

To demonstrate the aforementioned trade-off in estimating the number of Byzantine clients in federated learning, we apply the FedRo algorithm to a classification task on the CIFAR-10 dataset \citep{CIFAR10}, using the cross-entropy loss function as the objective function and ResNet-20 \citep{HeZhReSu16} as the classifier model.  

\noindent\textbf{Data Assignment:} CIFAR-10 consists of 10 classes and 5k training images per class. 
We equally divide the 50k samples into $n=16$ clients. Among the 3125 samples in each client $k$, suppose a fraction $p_{k,c}$ belongs to the $c$-th class. Randomly set $[p_{k,1},\ldots,p_{k,10}]$ from the $10$-dimensional symmetric Dirichlet distribution $\text{Dir}_{10}(\alpha)$. 
A smaller $\alpha$ corresponds to greater heterogeneity among the clients. 

\noindent\textbf{Models and Training Schemes:} 
The batch normalization (BN) layers in ResNet-20 are replaced with group normalization (GN) layers, since BN performs poorly with heterogeneous data across clients~\citep{GN}. We implement the FedRo algorithm with an $\hat{f}$-CWTM aggregator on a grid of $\hat{f}\in\{0,1,\ldots,7\}$, $f\in\{0,4\}$ and $\alpha\in\{0.1,1,10\}$. For $\alpha=0.1$, we run FedRo for $T=800$ total communication rounds with $H=49$ local SGD steps per round. For $\alpha=1$ and $\alpha=10$, we run FedRo for $T=400$ total communication rounds with {$H=98$} local SGD steps per round. Each SGD step uses batchsize 64, weight decay $5\times10^{-4}$ and an step-wise diminishing stepsize (see Eq. (\ref{eq:lr_t}) in  Appendix \ref{ExperDetail}). Each Byzantine client uploads a vector from normal distribution $\mathcal{N}(0,5)$. More details on data division, model architectures, and training schemes are provided in Appendix \ref{ExperDetail}. 




\begin{table}[h]
\centering
\caption{Top-1 Accuracies of FedRo Algorithm on CIFAR-10 Data. A smaller $\alpha$ corresponds to higher heterogeneity $G^2$. When $\hat{f}>f$, the accuracy drops compared with the corresponding accuracy under $\hat{f}=f$ (bolded) are marked in the parentheses.}
\label{tab:f-alpha}
\setlength{\tabcolsep}{6pt}
\renewcommand{\arraystretch}{1}
\begin{tabular}{lccc|ccc}
\hline
\multirow{2}{*}{\textbf{$\hat{f}$}} 
& \multicolumn{3}{c|}{\textbf{$f=0$}} 
& \multicolumn{3}{c}{\textbf{$f=4$}} \\
\cmidrule(lr){2-4} \cmidrule(lr){5-7}
& $\alpha=0.1$ & $\alpha=1.0$ & $\alpha=10.0$ 
& $\alpha=0.1$ & $\alpha=1.0$ & $\alpha=10.0$ \\
\hline
0 & \textbf{65.61} & \textbf{78.76} & \textbf{80.24} & 10.01 & 10.11 & 10.01 \\
1 & 62.28(-3.33) & 77.31(-1.45) & 79.45(-0.79) & 12.85 & 10.04 & 10.00 \\
2 & 53.01(-12.60) & 77.13(-1.63) & 79.63(-0.61) &  9.17 & 10.00 &  9.85 \\
3 & 51.87(-13.74) & 76.76(-2.00) & 78.28(-1.96) & 11.46 & 23.98 & 25.70 \\
4 & 48.47(-17.14) & 76.80(-1.96) & 77.93(-2.31) & \textbf{54.58} & \textbf{74.19} & \textbf{75.80} \\
5 & 47.25(-18.36) & 76.93(-1.83) & 77.67(-2.57) & 49.83(-4.75) & 73.53(-0.66) & 75.77(-0.03) \\
6 & 45.27(-20.34) & 75.32(-3.44) & 77.91(-2.33) & 47.65(-6.93) & 72.62(-1.57) & 74.35(-1.45) \\
7 & 43.38(-22.23) & 75.01(-3.75) & 77.87(-2.37) & 43.69(-10.89) & 72.45(-1.74) & 74.02(-1.78) \\
\hline
\end{tabular}
\end{table}

\textbf{Main Results:} Table~\ref{tab:f-alpha} reports the top-1 accuracies for each candidate $(\hat{f},f,\alpha)$. 
When $\hat{f}<f=4$ (underestimation), the accuracies are extremely low, which fits the divergence result in Theorem \ref{thm:FedAvg_poor}. When $\hat{f}\ge f$ (non-underestimation) in both no-Byzantine ($f=0$) and Byzantine ($f=4$) settings, the accuracy decreases in general as $\hat{f}$ increases from $f$. Moreover, with larger $\alpha$ (i.e. lower heterogeneity $G^2$), such an accuracy decrease with larger $\hat{f}$ slows down. These results fit Theorems \ref{thm:FedAvg_lower} and \ref{thm:FedAvg_upper} which indicate that as $\hat{f}$ increases from $f$, the tight convergence lower bound $\frac{\hat{f}G^2}{n-f-\hat{f}}$ increases at a rate proportional to the heterogeneity $G^2$.

\section{Conclusion}
To our knowledge, this is the first work that systematically investigates the theoretical effect of the estimated number of Byzantine clients on both aggregation error and federated learning performance. Both theoretical and empirical results demonstrate that while an aggregator with a larger robustness degree can tolerate more Byzantine clients, the performance can deteriorate when there are actually fewer or even no Byzantine clients. 



\bibliography{mybib.bib}
\bibliographystyle{iclr2026_conference}

\appendix
\onecolumn

\addcontentsline{toc}{section}{Appendix} 

\newpage
\part{Appendix} 
\parttoc 
\section{Proof of Theorem \ref{thm:agg_finc_lower}}\label{sec:proof_agg_finc_lower}
The commonly used coordinate-wise trimmed mean (CWTM) aggregator $TM_{\hat{f}/n}$ defined by Eq. (\ref{eq:CWTM}) has been proved to be an $(\hat{f},\hat{\kappa})$-robust aggregator (see Proposition 2 of \cite{allouah2023fixing}) with $\hat{\kappa}=\frac{6\hat{f}}{n-2\hat{f}}\big(1+\frac{\hat{f}}{n-2\hat{f}}\big)$. Hence, it remains to prove that $TM_{\hat{f}/n}$ is not $(f,\kappa)$-robust for any $\hat{f}<f<\frac{n}{2}$ and $\kappa>0$. 

Select the scalars $\{x_k\}_{k=1}^n\subset \mathbb{R}$ as follows. 
\begin{align}
x_1=x_2=\cdots=x_{n-f}=0, x_{n-f+1}=x_{n-f+2}=\cdots=x_n=1.\label{eq:TM_xn}
\end{align}
For $S=[n-f]=\{1,2,\ldots,n-f\}$ with $|S|=n-f$, we have
\begin{align}
\overline{x}_S=\frac{1}{|S|}\sum_{k\in S}x_k=0, \quad
\frac{1}{|S|}\sum_{k\in S}(x_k-\overline{x}_S)^2=0.\nonumber
\end{align}
Suppose $TM_{\hat{f}/n}$ is $(f,\kappa)$-robust for some $\kappa>0$, which implies that
\begin{align}
|TM_{\hat{f}/n}(x_1,\ldots,x_n)-\overline{x}_S|\le \kappa'\cdot \frac{1}{|S|}\sum_{k\in S}(x_k-\overline{x}_S)^2=0,
\end{align}
so $TM_{\hat{f}/n}(x_1,\ldots,x_n)=\overline{x}_S=0$. This contradicts with the definition (\ref{eq:CWTM}) of $TM_{\hat{f}/n}$ which along with the scalars (\ref{eq:TM_xn}) implies that
\begin{align}
TM_{\hat{f}/n}(x_1,\ldots,x_n)=\frac{1}{n-2\hat{f}}\sum_{k=\hat{f}+1}^{n-\hat{f}}x_k=\frac{f-\hat{f}}{n-2\hat{f}}>0. 
\end{align}
Therefore, $TM_{\hat{f}/n}$ is not $(f,\kappa)$-robust for any $\hat{f}<f<\frac{n}{2}$ and $\kappa>0$. 

\section{Proof of Theorem \ref{thm:agg_fdec_lower}}
\subsection{Proof of Item \ref{item:kappa}}
The conclusion is obvious when $f=\hat{f}$, so we consider the case where $0<f<\hat{f}$.

For any set $S\subset [n]$ with size $|S|=n-f$, $S$ contains $q=\frac{(n-f)!}{(n-\hat{f})!(\hat{f}-f)!}$ subsets $S_1,\ldots,S_q\subset S$ with the same size $|S_1|=\ldots=|S_q|=n-\hat{f}$. Then for any $x_1,\ldots,x_n\in\mathbb{R}^d$ and $E\subset [n]$, denote $\overline{x}_E=\frac{1}{|E|}\sum_{k\in E}x_k$. Then, we prove that $\mathcal{A}$ is $(f,\hat{\kappa})$-robust as follows.
\begin{align}
&\|\mathcal{A}(x_1,\ldots,x_n)-\overline{x}_S\|^2\nonumber\\
=&\Big\|\mathcal{A}(x_1,\ldots,x_n)-\frac{1}{q}\sum_{i=1}^q \overline{x}_{S_i}\Big\|^2\nonumber\\
\overset{(a)}{\le}& \frac{1}{q}\sum_{i=1}^q\|\mathcal{A}(x_1,\ldots,x_n)-\overline{x}_{S_i}\|^2\nonumber\\
\overset{(b)}{\le}& \frac{1}{q}\sum_{i=1}^q\frac{\hat{\kappa}}{n-\hat{f}}\sum_{k\in S_i}\|x_k-\overline{x}_{S_i}\|^2\nonumber\\
=&\frac{\hat{\kappa}}{q(n-\hat{f})}\sum_{i=1}^q\sum_{k\in S_i}\|(x_k-\overline{x}_{S})-(\overline{x}_{S_i}-\overline{x}_{S})\|^2\nonumber\\
=&\frac{\hat{\kappa}}{q(n-\hat{f})}\sum_{i=1}^q\sum_{k\in S_i}\big[\|x_k-\overline{x}_{S}\|^2+\|\overline{x}_{S_i}-\overline{x}_{S}\|^2-2\langle x_k-\overline{x}_{S},\overline{x}_{S_i}-\overline{x}_{S}\rangle\big]\nonumber\\
=&\frac{\hat{\kappa}}{q(n-\hat{f})}\sum_{i=1}^q\sum_{k\in S_i}\big[\|x_k-\overline{x}_{S}\|^2\big]+\frac{\hat{\kappa}}{q}\sum_{i=1}^q\big[\|\overline{x}_{S_i}-\overline{x}_{S}\|^2\big]\nonumber\\
&-\frac{2\hat{\kappa}}{q(n-\hat{f})}\sum_{i=1}^q\langle |S_i|\overline{x}_{S_i}-|S_i|\overline{x}_{S},\overline{x}_{S_i}-\overline{x}_{S}\rangle\nonumber\\
\overset{(c)}{=}&\frac{\hat{\kappa}(n-\hat{f})!(\hat{f}-f)!}{(n-\hat{f})(n-f)!}\frac{(n-f-1)!}{(n-\hat{f}-1)!(\hat{f}-f)!}\sum_{k\in S}\big[\|x_k-\overline{x}_{S}\|^2\big]+\frac{\hat{\kappa}}{q}\sum_{i=1}^q\big[\|\overline{x}_{S_i}-\overline{x}_{S}\|^2\big]\nonumber\\
&-\frac{2\hat{\kappa}}{q}\sum_{i=1}^q\big[\|\overline{x}_{S_i}-\overline{x}_{S}\|^2\big]\nonumber\\
\le& \frac{\hat{\kappa}}{n-f}\sum_{k\in S}\big[\|x_k-\overline{x}_{S}\|^2\big].\nonumber
\end{align}
where (a) applies Jensen's inequality to the convex function $\|\cdot\|^2$, (b) applies the $(\hat{f},\hat{\kappa})$-robust aggregator $\mathcal{A}$ to the set $S_i$ of size $|S_i|=n-\hat{f}$, (c) uses $q=\frac{(n-f)!}{(n-\hat{f})!(\hat{f}-f)!}$ and the fact that each $k\in S$ is contained by $\frac{(n-f-1)!}{(n-\hat{f}-1)!(\hat{f}-f)!}$ sets of $\{S_i\}_{i=1}^q$ (since these sets for a certain $k\in S$ can be obtained by removing $(\hat{f}-f)$ elements from $S\backslash\{k\}$ with size $|S\backslash\{k\}|=n-f-1$).  

\subsection{Proof of Item \ref{item:kappa_ge}}
Suppose an aggregator $\mathcal{A}:(\mathbb{R}^{d})^n\to\mathbb{R}^d$ is $(\hat{f},\hat{\kappa})$-robust and $(f,\kappa)$-robust. Select the scalars $\{x_k\}_{k=1}^n\subset\mathbb{R}$ as follows.
\begin{align}
x_1=x_2=\ldots=x_{n-\hat{f}}=0, x_{n-\hat{f}+1}=\ldots=x_n=1. \label{eq:xk}
\end{align}

Since $\mathcal{A}$ is $(\hat{f},\hat{\kappa})$-robust, for $S'=[n-\hat{f}]=\{1,2,\ldots,n-\hat{f}\}$ we have
\begin{align}
\overline{x}_{S'}=&\frac{1}{|S'|}\sum_{k\in S'}x_k=0,\quad
|\mathcal{A}(x_1,\ldots,x_n)-\overline{x}_{S'}|^2\le \frac{\kappa}{|S'|}\sum_{k\in S'}|x_k-\overline{x}_{S'}|^2=0,\nonumber
\end{align}
so $\mathcal{A}(x_1,\ldots,x_n)=\overline{x}_{S'}=0$. 

Denote the set $S=\{f+1,f+2,\ldots,n\}$ which contains $n-\hat{f}+1,\ldots,n$ as $0\le f\le \hat{f}<\frac{n}{2}$, so $\overline{x}_{S}=\frac{1}{|S|}\sum_{k\in S}x_k=\frac{\hat{f}}{n-f}$. Then since $|S|=n-f$ and $\mathcal{A}$ is $(f,\kappa)$-robust, we have 
\begin{align}
|\mathcal{A}(x_1,\ldots,x_n)-\overline{x}_{S}|^2\le \frac{\kappa}{|S|}\sum_{k\in S}|x_k-\overline{x}_{S}|^2.\nonumber
\end{align}
Substituting $\mathcal{A}(x_1,\ldots,x_n)=0$, $\overline{x}_{S}=\frac{\hat{f}}{n-f}$, $|S|=n-f$ and Eq. (\ref{eq:xk}) into the inequality above, we have
\begin{align}
\frac{\hat{f}^2}{(n-f)^2}\le&\frac{\kappa}{n-f}\Big[\hat{f}\Big(1-\frac{\hat{f}}{n-f}\Big)^2+(n-f-\hat{f})\Big(\frac{\hat{f}}{n-f}\Big)^2\Big]\nonumber\\
=&\frac{\kappa}{(n-f)^3}\Big[\hat{f}(n-f-\hat{f})^2+(n-f-\hat{f})\hat{f}^2\Big]\nonumber\\
=&\frac{\kappa\hat{f}(n-f-\hat{f})}{(n-f)^3}[(n-f-\hat{f})+\hat{f}]\nonumber\\
=&\frac{\kappa\hat{f}(n-f-\hat{f})}{(n-f)^2},\nonumber
\end{align}
which implies $\kappa\ge\frac{\hat{f}}{n-f-\hat{f}}$.  

\subsection{Proof of Item \ref{item:kappa_le}}
We use the composite aggregator $\hat{f}$-Krum$\circ$$\hat{f}$-NNM is defined by Definition \ref{def:NNM} with $\mathcal{A}=\hat{f}$-Krum. Based on Lemmas \ref{lemma:NNM} and \ref{lemma:Krum}, for any $0\le f\le \hat{f}<\frac{n}{2}$, $(\hat{f}\mhyphen{\rm Krum})\circ(\hat{f}\mhyphen{\rm NNM})$ is an $(f,\kappa)$-robust aggregator with $$\kappa=\frac{12\hat{f}}{n-f}\Big(\frac{6(n-f)}{n-f-\hat{f}}+1\Big)\le \frac{12\hat{f}}{n-f}\cdot\Big(\frac{6(n-f)}{n-f-\hat{f}}+\frac{n-f}{n-f-\hat{f}}\Big)=\frac{84\hat{f}}{n-f-\hat{f}}.$$
This concludes the proof of Item \ref{item:kappa_le}. It remains to prove Lemmas \ref{lemma:NNM} and \ref{lemma:Krum} in the next two subsections.

\subsection{Proof of Lemma \ref{lemma:NNM} for NNM}\label{subsec:proof_NNM} 


For any $n\ge 1$, $\hat{f}\in\big[0,\frac{n}{2}\big)$, $k\in[n]$ and $x_1,\ldots,x_n\in\mathbb{R}^d$, denote $\mathcal{N}_k\subset [n]$ as the set of $(n-\hat{f})$ indexes from $[n]$ such that $\{x_i\}_{i\in\mathcal{N}_k}$ are the $(n-\hat{f})$ nearest neighbors of $x_k$. In other words, $\max_{i\in\mathcal{N}_k}\|x_i-x_k\|\le \min_{j\in[n]\backslash\mathcal{N}_k}\|x_j-x_k\|$. Then, for any set $S\subset[n]$ with size $|S|=n-f$, we have 
\begin{align}
\frac{1}{|\mathcal{N}_k|}\sum_{i\in\mathcal{N}_k}\|x_i-x_k\|^2\le\frac{1}{|S|}\sum_{i\in\mathcal{S}}\|x_i-x_k\|^2,\label{eq:avg_VS_mu}
\end{align}
since $\frac{1}{|\mathcal{N}_k|}\sum_{i\in\mathcal{N}_k}\|x_k-x_i\|^2$ is the average of the $|\mathcal{N}_k|=n-\hat{f}$ smallest numbers in $\{\|x_k-x_i\|^2\}_{i=1}^n$, while $\frac{1}{|S|}\sum_{i\in\mathcal{S}}\|x_k-x_i\|^2$ is the average of $|S|=n-f$ ($|S|\ge|\mathcal{N}_k|$) numbers in $\{\|x_k-x_i\|^2\}_{i=1}^n$. Then we have 
\begin{align}
&\|y_k-\overline{x}_S\|^2\nonumber\\
=&\Big\|\frac{1}{n-\hat{f}}\sum_{i\in\mathcal{N}_k}x_i-\frac{1}{n-f}\sum_{i\in S}x_i\Big\|^2\nonumber\\
=&\Big\|\Big(\frac{1}{n-\hat{f}}-\frac{1}{n-f}\Big)\sum_{i\in S\cap\mathcal{N}_k}(x_i-x_k)+\frac{1}{n-\hat{f}}\sum_{i\in\mathcal{N}_k\backslash S}(x_i-x_k)-\frac{1}{n-f}\sum_{i\in S\backslash\mathcal{N}_k}(x_i-x_k)\Big\|^2\nonumber\\
\le&3\Big\|\frac{\hat{f}-f}{(n-\hat{f})(n-f)}\sum_{i\in S\cap\mathcal{N}_k}(x_i-x_k)\Big\|^2+3\Big\|\frac{1}{n-\hat{f}}\sum_{i\in\mathcal{N}_k\backslash S}(x_i-x_k)\Big\|^2\nonumber\\
&+3\Big\|-\frac{1}{n-f}\sum_{i\in S\backslash\mathcal{N}_k}(x_i-x_k)\Big\|^2\nonumber\\
\le&\frac{3|S\cap\mathcal{N}_k|(\hat{f}-f)^2}{(n-\hat{f})^2(n-f)^2}\sum_{i\in S\cap\mathcal{N}_k}\|x_i-x_k\|^2 + \frac{3|\mathcal{N}_k\backslash S|}{(n-\hat{f})^2}\sum_{i\in\mathcal{N}_k\backslash S}\|x_i-x_k\|^2\nonumber\\ 
&+\frac{3|S\backslash\mathcal{N}_k|}{(n-f)^2}\sum_{i\in S\backslash\mathcal{N}_k}\|x_i-x_k\|^2\nonumber\\
\overset{(a)}{\le}&\frac{3(\hat{f}-f)^2}{(n-\hat{f})(n-f)^2}\sum_{i\in S\cap\mathcal{N}_k}\|x_i-x_k\|^2 + \frac{3f}{(n-\hat{f})^2}\sum_{i\in\mathcal{N}_k}\|x_i-x_k\|^2\nonumber\\ 
&+\frac{3\hat{f}}{(n-f)^2}\sum_{i\in S\backslash\mathcal{N}_k}\|x_i-x_k\|^2\nonumber\\
\overset{(b)}{\le}&\frac{3\hat{f}}{|S|(n-f)}\sum_{i\in S\cap\mathcal{N}_k}\|x_i-x_k\|^2 +\frac{3f|\mathcal{N}_k|}{|S|(n-\hat{f})^2}\sum_{i\in\mathcal{S}}\|x_i-x_k\|^2 +\frac{3\hat{f}}{|S|(n-f)}\sum_{i\in S\backslash\mathcal{N}_k}\|x_i-x_k\|^2\nonumber\\
\overset{(c)}{\le}&\Big[\frac{3\hat{f}}{|S|(n-f)}+\frac{3f}{|S|(n-\hat{f})}\Big]\sum_{i\in S}\|x_i-x_k\|^2\nonumber\\
\overset{(d)}{\le}&\frac{6\hat{f}}{|S|(n-f)}\sum_{i\in S}\|x_i-x_k\|^2,\label{eq:yj2Savg} 
\end{align}
where (a) uses $|S\cap\mathcal{N}_k|\le|\mathcal{N}_k|=n-\hat{f}$, $|\mathcal{N}_k\backslash S|\le n-|S|=f$, $|S\backslash\mathcal{N}_k|\le n-|\mathcal{N}_k|=\hat{f}$, (b) uses Eq. (\ref{eq:avg_VS_mu}), $0\le f\le \hat{f}<\frac{n}{2}$ (so $\hat{f}-f\le \hat{f}\le n-\hat{f}$) and $|S|=n-f$, (c) uses $|\mathcal{N}_k|=n-\hat{f}$, and (d) uses the following inequality.
\begin{align}
\frac{3\hat{f}}{|S|(n-f)}-\frac{3f}{|S|(n-\hat{f})}=\frac{3\hat{f}(n-\hat{f})-3f(n-f)}{|S|(n-f)(n-\hat{f})}=\frac{3(\hat{f}-f)(n-\hat{f}-f)}{|S|(n-f)(n-\hat{f})}\ge 0.\nonumber
\end{align}  
Denote $(y_1,\ldots,y_n)=\hat{f}\mhyphen{\rm NNM}(x_1,\ldots,x_n)$ where $y_k=\frac{1}{n-\hat{f}}\sum_{i\in\mathcal{N}(x_k)}x_i$ as defined in Eq. (\ref{eq:NNM}). Then denote $\overline{x}_S=\frac{1}{S}\sum_{k\in S}x_k$ and $\overline{y}_S=\frac{1}{S}\sum_{k\in S}y_k$. We obtain that
\begin{align}
&\|\overline{y}_S-\overline{x}_S\|^2+\frac{1}{|S|}\sum_{k\in S}\|y_k-\overline{y}_S\|^2\nonumber\\
=&\frac{1}{|S|}\sum_{k\in S}\big(\|y_k-\overline{y}_S\|^2+\|\overline{y}_S-\overline{x}_S\|^2\big)\nonumber\\
=&\frac{1}{|S|}\sum_{k\in S}\big[\|(y_k-\overline{y}_S)+(\overline{y}_S-\overline{x}_S)\|^2-2\langle y_k-\overline{y}_S, \overline{y}_S-\overline{x}_S\rangle\big]\nonumber\\
=&\frac{1}{|S|}\sum_{k\in S}\|y_k-\overline{x}_S\|^2-2\Big\langle\overline{y}_S-\overline{x}_S,\frac{1}{|S|}\sum_{k\in S}(y_k-\overline{y}_S)\Big\rangle\nonumber\\
\overset{(a)}{\le}&\frac{1}{|S|}\sum_{k\in S}\frac{6\hat{f}}{|S|(n-f)}\sum_{i\in S}\|x_i-x_k\|^2-2\langle\overline{y}_S-\overline{x}_S,\overline{y}_S-\overline{y}_S\rangle\nonumber\\
=&\frac{6\hat{f}}{|S|^2(n-f)}\sum_{i,k\in S}(\|x_i-\overline{x}_S\|^2+\|x_k-\overline{x}_S\|^2-2\langle x_i-\overline{x}_S,x_k-\overline{x}_S\rangle)\nonumber\\
=&\frac{6\hat{f}}{|S|^2(n-f)}\Big[|S|\sum_{i\in S}(\|x_i-\overline{x}_S\|^2)+|S|\sum_{k\in S}(\|x_k-\overline{x}_S\|^2)-2\big\langle |S|\overline{x}_S-|S|\overline{x}_S,|S|\overline{x}_S-|S|\overline{x}_S\big\rangle\Big]\nonumber\\
=&\frac{12\hat{f}}{n-f}\cdot\frac{1}{|S|}\sum_{i\in S}\|x_i-\overline{x}_S\|^2,\label{eq:err_sq_decom}
\end{align}
where (a) uses Eq. (\ref{eq:yj2Savg}). 

For any $S\subset[n]$ with size $|S|=n-f$, we prove below that the composite aggregator $\mathcal{A}\circ\hat{f}\mhyphen{\rm NNM}(x_1,\ldots,x_n)=\mathcal{A}(y_1,\ldots,y_n)$ is $(f,\kappa)$-robust with $\kappa=\frac{12\hat{f}(1+\kappa)}{n-f}$.
\begin{align}
&\|\mathcal{A}\circ\hat{f}\mhyphen{\rm NNM}(x_1,\ldots,x_n)-\overline{x}_S\|^2\nonumber\\
\le&(1+\kappa^{-1})\|\mathcal{A}(y_1,\ldots,y_n)-\overline{y}_S\|^2+(1+\kappa)\|\overline{y}_S-\overline{x}_S\|^2\nonumber\\
\overset{(a)}{\le}&\frac{1+\kappa}{|S|}\sum_{k=1}^n\|y_k-\overline{y}_S\|^2+(1+\kappa)\|\overline{y}_S-\overline{x}_S\|^2\nonumber\\
\overset{(b)}{\le}&\frac{12\hat{f}(1+\kappa)}{n-f}\cdot\frac{1}{|S|}\sum_{i\in S}\|x_i-\overline{x}_S\|^2
\end{align}
where (a) uses the fact that $\mathcal{A}$ is an $(f,\kappa)$-robust aggregator, (b) uses Eq. (\ref{eq:err_sq_decom}).  

\subsection{Proof of Lemma \ref{lemma:Krum} for Krum Aggregator}\label{subsec:proof_Krum}
For any $n\ge 1$, $\hat{f}\in\big[0,\frac{n}{2}\big)$, $k\in[n]$ and $x_1,\ldots,x_n\in\mathbb{R}^d$, denote $\mathcal{N}_k\subset [n]$ as the set of $(n-\hat{f})$ indexes from $[n]$ such that $\{x_i\}_{i\in\mathcal{N}_k}$ are the $(n-\hat{f})$ nearest neighbors of $x_k$. In other words, $\max_{i\in\mathcal{N}_k}\|x_i-x_k\|\le \min_{j\in[n]\backslash\mathcal{N}_k}\|x_j-x_k\|$. Then, for any set $S\subset[n]$ with size $|S|=n-f$, Eq. (\ref{eq:avg_VS_mu}) has been proved as repeated below. 
\begin{align}
\frac{1}{|\mathcal{N}_k|}\sum_{i\in\mathcal{N}_k}\|x_k-x_i\|^2\le\frac{1}{|S|}\sum_{i\in\mathcal{S}}\|x_k-x_i\|^2.\label{eq:avg_VS_mu2}
\end{align}

Then for $k^*\in[n]$ defined in the $\hat{f}$-Krum (\ref{eq:Krum}), we have
\begin{align}
&\sum_{i \in \mathcal{N}_{k^*}}\left\|x_{k^*}-x_i\right\|^2 \nonumber\\
=&\min _{k\in[n]} \sum_{i \in \mathcal{N}_k}\left\|x_k-x_i\right\|^2 \nonumber\\
\le& \frac{1}{|S|}\sum _{k \in S} \sum_{i \in \mathcal{N}_k}\left\|x_k-x_i\right\|^2 \nonumber\\
\overset{(a)}{\le}& \frac{n-\hat{f}}{|S|^2} \sum_{i,k \in S} \left\|x_k-\overline{x}_S-\left(x_i-\overline{x}_S\right)\right\|^2 \nonumber\\
=&\frac{n-\hat{f}}{|S|^2} \sum_{i, k \in S}\Big(\left\|x_k-\overline{x}_S\right\|^2+\left\|x_i-\overline{x}_S\right\|^2-2\left\langle x_k-\overline{x}_S, x_i-\overline{x}_S\right\rangle\Big) \nonumber\\
=&\frac{n-\hat{f}}{|S|^2}\Big[\sum_{i, k \in S}\left\|x_k-\overline{x}_S\right\|^2+\sum_{i, k \in S}\left\|x_i-\overline{x}_S\right\|^2-2 \sum_{i, k \in S}\left\langle x_k-\overline{x}_S, x_i-\overline{x}_S\right\rangle\Big] \nonumber\\
=&\frac{n-\hat{f}}{|S|^2}\Big[2|S| \sum_{i \in S}\left\|x_i-\overline{x}_S\right\|^2-2 \sum_{i \in S}\Big\langle\underbrace{\sum_{k \in S}\left(x_k-\overline{x}_S\right)}_{=0}, x_i-\overline{x}_S\Big\rangle\Big]\nonumber\\
\overset{(b)}{\le}&2\sum_{i \in S}\|x_i-\overline{x}_S\|^2.\label{eq:k*d2sum}
\end{align}
where (a) uses Eq. (\ref{eq:avg_VS_mu2}) and $|\mathcal{N}_k|=n-\hat{f}$, and (b) uses $|S|=n-f\ge n-\hat{f}$. Note that 
\begin{align}
\|x_{k^*}-\overline{x}_S\|^2 \le 2\|x_{k^*}-x_i\|^2+2\|x_i-\overline{x}_S\|^2,\forall i\in S
\end{align}
which can be rearranged into
\begin{align}
\|x_{k^*}-x_i\|^2\ge \frac{1}{2}\|x_{k^*}-\overline{x}_S\|^2-\|x_i-\overline{x}_S\|^2,\forall i\in S.\label{eq:ik*ge}
\end{align}
Then, we have
\begin{align}
\sum_{i \in \mathcal{N}_{k^*}}\left\|x_{k^*}-x_i\right\|^2 & \ge \sum_{i \in S \cap \mathcal{N}_{k^*}}\left\|x_{k^*}-x_i\right\|^2\nonumber\\
& \overset{(a)}{\ge} \frac{|S \cap \mathcal{N}_{k^*}|}{2}\left\|x_{k^*}-\bar{x}_S\right\|^2-\sum_{i \in S \cap \mathcal{N}_{k^*}}\left\|x_i-\bar{x}_S\right\|^2 \nonumber\\
& \overset{(b)}{\ge} \frac{n-f-\hat{f}}{2}\left\|x_{k^*}-\bar{x}_S\right\|^2-\sum_{i \in S \cap \mathcal{N}_{k^*}}\left\|x_i-\bar{x}_S\right\|^2, \nonumber
\end{align}
where (a) uses Eq. (\ref{eq:ik*ge}) and (b) uses $|S\cap \mathcal{N}_{k^*}|=|S|+|\mathcal{N}_{k^*}|-|S\cup\mathcal{N}_{k^*}|\ge (n-f)+(n-\hat{f})-n=n-f-\hat{f}$. By rearranging the inequality above, we have
\begin{align}
\|x_{k^*}-\bar{x}_S\|^2\le&\frac{2}{n-f-\hat{f}}\Big[\sum_{i \in \mathcal{N}_{k^*}}\|x_{k^*}-x_i\|^2+\sum_{i \in S \cap \mathcal{N}_{k^*}}\|x_i-\bar{x}_S\|^2\Big]\nonumber\\
\overset{(a)}{\le}&\frac{2}{n-f-\hat{f}}\Big[2\sum_{i \in S}\|x_i-\overline{x}_S\|^2+\sum_{i \in S}\|x_i-\overline{x}_S\|^2\Big]\nonumber\\
\le&\frac{6}{n-f-\hat{f}}\sum_{i \in S}\|x_i-\overline{x}_S\|^2 = \frac{6(n-f)}{n-f-\hat{f}}\cdot\frac{1}{|S|}\sum_{i \in S}\|x_i-\overline{x}_S\|^2,\nonumber
\end{align}
where (a) uses Eq. (\ref{eq:k*d2sum}). Therefore, $(\hat{f}\mhyphen{\rm Krum})(x_1,\ldots,x_n)=x_{k^*}$ defined by Eq. (\ref{eq:Krum}) is $(f,\kappa)$-robust with $\kappa\le \frac{6(n-f)}{n-f-\hat{f}}$ 

\section{Proof of Theorem \ref{thm:FedAvg_poor}}
Without loss of generality, use $\mathcal{H}=[n-f]=\{1,2,\ldots,n-f\}$ as the set of honest clients. Select the loss function $\ell_k(w)=\frac{w^2}{2} (w\in\mathbb{R})$ for every client $k\in[n]$, which satisfies Assumptions \ref{assum:l*}-\ref{assum:PL}. 

In Algorithm \ref{alg:FedAvg}, use the coordinate-wise trimmed mean (CWTM) aggregator defined by Eq. (\ref{eq:CWTM}) which has been proved to be an $(\hat{f},\hat{\kappa})$-robust aggregator (see Proposition 2 of \cite{allouah2023fixing}) with $\hat{\kappa}=\frac{6\hat{f}}{n-2\hat{f}}\big(1+\frac{\hat{f}}{n-2\hat{f}}\big)$. All the honest clients $k\in\mathcal{H}=[n-f]$ perform the local gradient descent updates (\ref{eq:localGD}) as follows. 
\begin{align}
w_{t,h+1}^{(k)}=w_{t,h}^{(k)}-\gamma\ell_k'(w_{t,h}^{(k)})=(1-\gamma)w_{t,h}^{(k)}, k\in\mathcal{H}=[n-f].
\end{align}
Iterating the update rule above yields that
\begin{align}
w_t^{(k)}=w_{t,H}^{(k)}=(1-\gamma)^Hw_{t,0}^{(k)}=(1-\gamma)^Hw_t,  k\in\mathcal{H}=[n-f].\nonumber
\end{align}
In contrast, we let the Byzantine clients $k\in[n]\backslash\mathcal{H}=\{n-f+1,n-f+2,\ldots,n\}$ upload $w_t^{(k)}=n|(1-\gamma)^Hw_t|+t$. Therefore, the global parameter update rule (\ref{eq:Fed_agg}) becomes
\begin{align}
w_{t+1}=&w_t+\mathcal{A}(\{w_t^{(k)}-w_t\}_{k=1}^n)\nonumber\\
=&w_t+\frac{(n-f-\hat{f})[(1-\gamma)^Hw_t-w_t]+(f-\hat{f})[n|(1-\gamma)^Hw_t|-w_t+t]}{n-2\hat{f}}\nonumber\\
=&\frac{n(f-\hat{f})|(1-\gamma)^Hw_t|-(n-f-\hat{f})|(1-\gamma)^Hw_t|+t(f-\hat{f})}{n-2\hat{f}}\nonumber\\
\overset{(a)}{\ge}&\frac{t(f-\hat{f})}{n-2\hat{f}},\nonumber
\end{align}
where (a) uses the CWTM aggregator $\mathcal{A}$ defined by Eq. (\ref{eq:CWTM}) as well as the fact that $\{w_t^{(k)}-w_t\}_{k=1}^n$ contains $n-f$ scalars $(1-\gamma)^Hw_t-w_t$ and $f$ larger scalars $n|(1-\gamma)^Hw_t|-w_t+t$, and (b) uses $n(f-\hat{f})-(n-f-\hat{f})>f+\hat{f}>0$ and $n-2\hat{f}>0$ since the integers $\hat{f}$, $f$, $n$ satisfy $0\le\hat{f}<f<\frac{n}{2}$. Since $\frac{(f-\hat{f})}{n-2\hat{f}}>0$, the inequality above implies that $|w_t|\to+\infty$ as $t\to+\infty$. Note that the objective function (\ref{eq:HFL}) in this example is $\ell_{\mathcal{H}}(w)=\frac{w^2}{2}$. Hence, Eqs. (\ref{eq:grad_poor}) and (\ref{eq:gap_poor}) can be proved as follows as $T\to+\infty$. 
\begin{align}
\frac{1}{T}\sum_{t=0}^{T-1}|\ell_{\mathcal{H}}'(w_t)|^2=\frac{1}{T}\sum_{t=0}^{T-1}|w_t|^2 \to+\infty.\nonumber
\end{align}
\begin{align}
\ell_{\mathcal{H}}(w_T)-\ell^*=\frac{w_T^2}{2}-0\to+\infty.\nonumber
\end{align}

\section{Proof of Theorem \ref{thm:FedAvg_lower}}\label{sec:proof_FedAvg_lower}
Without loss of generality, use $\mathcal{H}=[n-f]=\{1,2,\ldots,n-f\}$. Let the Byzantine clients adopt the same honest behavior as the honest clients, i.e., upload the result after $H$ local gradient descent updates. This is valid since Byzantine clients can upload any vectors. 

Select the following loss functions.
\begin{align}
\ell_k(w)=\left\{
\begin{aligned}
&cG(w+1)^2, k=1,2,\ldots,\hat{f}\\
&cGw^2, k=\hat{f}+1,\hat{f}+2,\ldots,n
\end{aligned}
\right.,\label{eq:loss_eg}
\end{align}
where $w\in\mathbb{R}$ and the constant $c>0$ is to be selected later. The derivatives of these loss functions are shown below.
\begin{align}
\ell_k'(w)=\left\{
\begin{aligned}
&2cG(w+1), k=1,2,\ldots,\hat{f}\\
&2cGw, k=\hat{f}+1,\hat{f}+2,\ldots,n
\end{aligned}
\right..\label{eq:dloss_eg}
\end{align}
It is straightforward to check that these loss functions (\ref{eq:loss_eg}) satisfy Assumption \ref{assum:smooth}, i.e., $L$-smoothness with Lipschitz constant $L=2cG$. Then the objective function (\ref{eq:HFL}) is
\begin{align}
\ell_{\mathcal{H}}(w)=&\frac{1}{|\mathcal{H}|}\sum_{k\in\mathcal{H}}\ell_k(w)\nonumber\\
=&\frac{1}{n-f}[\hat{f}\cdot cG(w+1)^2+(n-f-\hat{f})\cdot cGw^2]\nonumber\\
=&\frac{cG}{n-f}\Big[(n-f)w^2+2\hat{f}w+\hat{f}\Big]\nonumber\\
=&cG\Big[\Big(w+\frac{\hat{f}}{n-f}\Big)^2+\frac{\hat{f}(n-f-\hat{f})}{(n-f)^2}\Big],\label{eq:lH_eg}
\end{align}
which has the following derivative.
\begin{align}
\ell_{\mathcal{H}}'(w)=2cG\Big(w+\frac{\hat{f}}{n-f}\Big),\label{eq:dlH_eg}
\end{align}
The objective function (\ref{eq:lH_eg}) satisfies Assumption \ref{assum:l*} with $\ell^*=\inf_{w\in\mathbb{R}^d}\ell_{\mathcal{H}}(w)=\frac{cG\hat{f}(n-f-\hat{f})}{(n-f)^2}$, and also satisfies Assumption \ref{assum:PL} with $\mu=2cG$ since 
\begin{align}
&\ell_{\mathcal{H}}(w)-\ell^*-\frac{1}{2\mu}\|\nabla\ell_{\mathcal{H}}(w)\|^2=cG\Big(w+\frac{\hat{f}}{n-f}\Big)^2-\frac{1}{4cG}\cdot 4c^2G^2\Big(w+\frac{\hat{f}}{n-f}\Big)^2=0.\nonumber
\end{align}
Next, we will check Assumption \ref{assum:hetero} as follows.
\begin{align}
&\frac{1}{|\mathcal{H}|}\sum_{k\in\mathcal{H}}|\ell_k'(w)-\ell_{\mathcal{H}}'(w)|^2\nonumber\\
=&\frac{1}{n-f}\Big[\hat{f}\cdot (2cG)^2\Big(\frac{n-f-\hat{f}}{n-f}\Big)^2+(n-f-\hat{f})\cdot (2cG)^2\Big(\frac{\hat{f}}{n-f}\Big)^2\Big]\nonumber\\
=&\frac{(2cG)^2}{(n-f)^3}\big[\hat{f}(n-f-\hat{f})^2+(n-f-\hat{f})\hat{f}^2\big]\nonumber\\
=&\frac{\hat{f}(n-f-\hat{f})(2cG)^2}{(n-f)^2} 
\end{align}
By letting the equation above equal to $G^2$ to ensure Assumption \ref{assum:hetero}, we can select
\begin{align}
c=\frac{n-f}{2\sqrt{\hat{f}(n-f-\hat{f})}}. \label{eq:c_eg}
\end{align}
Suppose all the Byzantine clients behaves in the same way as the honest clients. Then every client $k\in S:=\{\hat{f}+1,\ldots,n\}$ adopts the following local gradient descent update.
\begin{align}
w_{t,h+1}^{(k)}=w_{t,h}^{(k)}-\gamma\ell_k'(w_{t,h}^{(k)})=(1-2cG\gamma)w_{t,h}^{(k)}, \forall k\in S.\nonumber
\end{align}
Iterating the update rule above yields that
\begin{align}
w_t^{(k)}=w_{t,H}^{(k)}=(1-2cG\gamma)^Hw_{t,0}^{(k)}=(1-2cG\gamma)^Hw_t, \forall k\in S.\nonumber
\end{align}
Since $|S|=n-\hat{f}$, $\frac{1}{|S|}\sum_{k\in S}(w_t^{(k)}-w_t)=[(1-2cG\gamma)^H-1]w_t$ and $\mathcal{A}$ is an $(\hat{f},\hat{\kappa})$-robust aggregator, we have 
\begin{align}
&\big|\mathcal{A}(\{w_t^{(k)}-w_t\}_{k=1}^n)-[(1-2cG\gamma)^H-1]w_t\big|^2\nonumber\\
\le&\hat{\kappa}\cdot \frac{1}{|S|}\sum_{k\in S}\big|w_t^{(k)}-w_t-[(1-2cG\gamma)^H-1]w_t\big|^2=0,\nonumber
\end{align}
so the global parameter update rule (\ref{eq:Fed_agg}) becomes
\begin{align}
w_{t+1}=w_t+\mathcal{A}(\{w_t^{(k)}-w_t\}_{k=1}^n)=w_t+[(1-2cG\gamma)^H-1]w_t=(1-2cG\gamma)^Hw_t.\label{eq:global_iter_eg}
\end{align}
We consider the following cases of hyperparameter choices. 

\textbf{(Case 1): } When $0<\gamma<\frac{1}{cG}$, we have $|(1-2cG\gamma)^H|<1$ and thus $w_t\to 0$ by Eq. (\ref{eq:global_iter_eg}). This implies that as $T\to+\infty$, we have
\begin{align}
\frac{1}{T}\sum_{t=0}^{T-1}\|\nabla \ell_{\mathcal{H}}(w_t)\|^2\to |\ell_{\mathcal{H}}'(0)|^2=\frac{4c^2G^2\hat{f}^2}{(n-f)^2}=\frac{\hat{f}G^2}{n-f-\hat{f}}\nonumber
\end{align}
and
\begin{align}
\ell_{\mathcal{H}}(w_T)-\ell^*\to \ell_{\mathcal{H}}(0)-\ell^*=cG\Big(\frac{\hat{f}}{n-f}\Big)^2=\frac{2c^2G^2}{\mu}\Big(\frac{\hat{f}}{n-f}\Big)^2=\frac{\hat{f}G^2}{2\mu(n-f-\hat{f})}\nonumber 
\end{align}
where we use $\mu=2cG$ and the choice of $c$ in Eq. (\ref{eq:c_eg}). Hence, Eqs. (\ref{eq:grad_ge}) and (\ref{eq:gap_ge}) hold in this case. 

\textbf{(Case 2): } When $\gamma=\frac{1}{cG}$ and $H$ is an even number, Eq. (\ref{eq:global_iter_eg}) implies that $w_t=w_{t-1}$ and thus $w_t\equiv w_0$. 

\textbf{(Case 3): } When $\gamma=\frac{1}{cG}$ and $H$ is an odd number, Eq. (\ref{eq:global_iter_eg}) implies that $w_t=-w_{t-1}$. Then for any even number $T$, we have
\begin{align}
\frac{1}{T}\sum_{t=0}^{T-1}\|\nabla \ell_{\mathcal{H}}(w_t)\|^2=&\frac{1}{2}\big[|\ell_{\mathcal{H}}'(w_0)|^2+|\ell_{\mathcal{H}}'(-w_0)|^2\big]\nonumber\\
=&\frac{1}{2}\Big[4c^2G^2\Big(w_0+\frac{\hat{f}}{n-f}\Big)^2+4c^2G^2\Big(-w_0+\frac{\hat{f}}{n-f}\Big)^2\Big]\nonumber\\
=&4\cdot\frac{(n-f)^2}{4\hat{f}(n-f-\hat{f})}\cdot G^2\Big(w_0^2+\frac{\hat{f}^2}{(n-f)^2}\Big)\ge \frac{\hat{f}G^2}{n-f-\hat{f}}.\nonumber 
\end{align}
and
\begin{align}
\mathop{\lim\sup}_{T\to\infty}\ell_H(w_T)-\ell^*=&cG\max\Big[\Big(w_0+\frac{\hat{f}}{n-f}\Big)^2, \Big(-w_0+\frac{\hat{f}}{n-f}\Big)^2\Big]\nonumber\\
\ge&\frac{2c^2G^2}{\mu}\frac{\hat{f}^2}{(n-f)^2}=\frac{\hat{f}G^2}{2\mu(n-f-\hat{f})},\nonumber
\end{align}
where we use $\mu=2cG$ and the choice of $c$ in Eq. (\ref{eq:c_eg}). Hence, Eqs. (\ref{eq:grad_ge}) and (\ref{eq:gap_ge}) hold in this case. 

\textbf{(Case 4): } When $\gamma>\frac{1}{cG}$, we have $|(1-2cG\gamma)^H|>1$ and thus $|w_t|\to +\infty$ by Eq. (\ref{eq:global_iter_eg}). This implies that as $T\to+\infty$, we have
\begin{align}
|\ell_{\mathcal{H}}'(w_T)|^2=4c^2G^2\Big(w_T+\frac{\hat{f}}{n-f}\Big)^2\to+\infty \Rightarrow\frac{1}{T}\sum_{t=0}^{T-1}\|\nabla \ell_{\mathcal{H}}(w_t)\|^2\to+\infty\nonumber
\end{align}
and
\begin{align}
\ell_{\mathcal{H}}(w_T)-\ell^*=cG\Big(w_T+\frac{\hat{f}}{n-f}\Big)^2\to+\infty.\nonumber 
\end{align}
Hence, Eqs. (\ref{eq:grad_ge}) and (\ref{eq:gap_ge}) hold in this case. 

\section{Proof of Theorem \ref{thm:FedAvg_upper}}
\subsection{Supporting Lemmas for Theorem \ref{thm:FedAvg_upper}}
\begin{lemma}\label{lemma:Vtk}
Suppose $L^2\gamma_t^2H_t(H_t-1)\le \frac{1}{2}$ and Assumption \ref{assum:hetero} holds. Then $V_{t,k}\overset{\rm def}{=}\sum_{h=0}^{H_t-1}\|w_{t,h}^{(k)}-w_t\|^2$ obtained from Algorithm \ref{alg:FedAvg} has the following upper bounds.
\begin{align}
V_{t,k}\le& 2\gamma_t^2H_t^2(H_t-1)\|\nabla \ell_k(w_t)\|^2\label{eq:Vtk_le}\\
\sum_{k\in\mathcal{H}}V_{t,k}\le&4\gamma_t^2|\mathcal{H}|H_t^2(H_t-1)\big[G^2+\|\nabla\ell_{\mathcal{H}}(w_t)\|^2\big]\label{eq:sum_Vtk_le}
\end{align}
\end{lemma}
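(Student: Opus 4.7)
\textbf{Proof proposal for Lemma \ref{lemma:Vtk}.}

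The plan is to bound $V_{t,k}$ by first unrolling the local gradient-descent recursion $w_{t,h+1}^{(k)} = w_{t,h}^{(k)} - \gamma_t \nabla \ell_k(w_{t,h}^{(k)})$ to write
\[
w_{t,h}^{(k)} - w_t = -\gamma_t \sum_{i=0}^{h-1} \nabla \ell_k(w_{t,i}^{(k)}),
\]
then applying Jensen/Cauchy--Schwarz to obtain $\|w_{t,h}^{(k)} - w_t\|^2 \le \gamma_t^2 h \sum_{i=0}^{h-1} \|\nabla \ell_k(w_{t,i}^{(k)})\|^2$. I would then split each gradient via the $L$-smoothness of $\ell_k$ (Assumption \ref{assum:smooth}) as
\[
\|\nabla \ell_k(w_{t,i}^{(k)})\|^2 \le 2L^2 \|w_{t,i}^{(k)} - w_t\|^2 + 2\|\nabla \ell_k(w_t)\|^2,
\]
so that
\[
\|w_{t,h}^{(k)} - w_t\|^2 \le 2\gamma_t^2 L^2 h \, V_{t,k} + 2\gamma_t^2 h^2 \|\nabla \ell_k(w_t)\|^2.
\]
Summing over $h = 0, \ldots, H_t - 1$ and using $\sum_{h=0}^{H_t-1} h \le \tfrac{H_t(H_t-1)}{2}$ and $\sum_{h=0}^{H_t-1} h^2 \le \tfrac{H_t^2(H_t-1)}{2}$, this yields a self-referential inequality of the shape
\[
V_{t,k} \le \gamma_t^2 L^2 H_t(H_t-1) V_{t,k} + \gamma_t^2 H_t^2(H_t-1) \|\nabla \ell_k(w_t)\|^2.
\]
At this point the assumption $L^2 \gamma_t^2 H_t(H_t-1) \le \tfrac{1}{2}$ lets me absorb the $V_{t,k}$ term on the right-hand side into the left, giving $\tfrac{1}{2} V_{t,k} \le \gamma_t^2 H_t^2(H_t-1) \|\nabla \ell_k(w_t)\|^2$, which is exactly Eq.~(\ref{eq:Vtk_le}).

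For the aggregate bound (\ref{eq:sum_Vtk_le}), I would sum (\ref{eq:Vtk_le}) over $k \in \mathcal{H}$ and then convert $\sum_{k\in\mathcal{H}} \|\nabla \ell_k(w_t)\|^2$ into a term controlled by the heterogeneity $G^2$ and by $\|\nabla \ell_{\mathcal{H}}(w_t)\|^2$. Using $\|a\|^2 \le 2\|a - b\|^2 + 2\|b\|^2$ with $a = \nabla \ell_k(w_t)$ and $b = \nabla \ell_{\mathcal{H}}(w_t)$, and then invoking Assumption \ref{assum:hetero} on the average of $\|\nabla \ell_k - \nabla \ell_{\mathcal{H}}\|^2$, I get $\sum_{k\in\mathcal{H}} \|\nabla \ell_k(w_t)\|^2 \le 2|\mathcal{H}| G^2 + 2|\mathcal{H}| \|\nabla \ell_{\mathcal{H}}(w_t)\|^2$, which combined with (\ref{eq:Vtk_le}) gives the claimed $4\gamma_t^2 |\mathcal{H}| H_t^2(H_t-1)[G^2 + \|\nabla \ell_{\mathcal{H}}(w_t)\|^2]$ bound.

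The main obstacle is bookkeeping rather than conceptual: I need to be careful with the exact prefactor when bounding $\sum_{h=0}^{H_t-1} h^2$ and when splitting $\|\nabla \ell_k(w_{t,i}^{(k)})\|^2$, because a slightly too generous Young split (e.g., using a $(1+\eta)/(1+\eta^{-1})$ decomposition) would degrade the constant $2$ in (\ref{eq:Vtk_le}) and the constant $4$ in (\ref{eq:sum_Vtk_le}). Using the plain factor-of-two Young inequality and the clean bound $\sum h^2 \le H_t^2(H_t-1)/2$ together with the stated step-size condition should land exactly on the constants stated in the lemma.
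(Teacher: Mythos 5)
Your proposal is correct and follows essentially the same route as the paper: unroll the local updates, apply Cauchy--Schwarz, split each gradient via $L$-smoothness into a drift term and $\|\nabla\ell_k(w_t)\|^2$, absorb the resulting $V_{t,k}$ term using the condition $L^2\gamma_t^2H_t(H_t-1)\le\frac12$, and then sum over $k\in\mathcal{H}$ with the factor-of-two split against $\nabla\ell_{\mathcal{H}}(w_t)$ and Assumption~\ref{assum:hetero}. The only difference is cosmetic bookkeeping (the paper swaps the order of summation and bounds $\sum_{h>h'}h$ uniformly, while you bound the partial drift sum by $V_{t,k}$ per step and use $\sum h^2\le H_t^2(H_t-1)/2$), and both land on identical constants.
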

\begin{proof}

\begin{align}
V_{t,k}\overset{\rm def}{=}&\sum_{h=0}^{H_t-1}\|w_{t,h}^{(k)}-w_t\|^2\nonumber\\
\overset{(a)}{=}&\sum_{h=0}^{H_t-1}\Big\|\sum_{h'=0}^{h-1}\gamma_t \nabla \ell_k(w_{t,h'}^{(k)})\Big\|^2\nonumber\\
\le&\gamma_t^2\sum_{h=0}^{H_t-1}\sum_{h'=0}^{h-1}h\|\nabla \ell_k(w_{t,h'}^{(k)})\|^2\nonumber\\
=&\gamma_t^2\sum_{h'=0}^{H_t-2}\sum_{h=h'+1}^{H_t-1}h\|\nabla \ell_k(w_{t,h'}^{(k)})\|^2\nonumber\\
\overset{(b)}{\le}&\frac{\gamma_t^2H_t(H_t-1)}{2}\sum_{h'=0}^{H_t-1}\|\nabla \ell_k(w_{t,h'}^{(k)})\|^2\nonumber\\
\le&\gamma_t^2H_t(H_t-1)\sum_{h=0}^{H_t-1}\big[\|\nabla \ell_k(w_{t,h}^{(k)})-\nabla \ell_k(w_t)\|^2+\|\nabla \ell_k(w_t)\|^2\big]\nonumber\\
\le&L^2\gamma_t^2H_t(H_t-1)\sum_{h=0}^{H_t-1}\big[\|w_{t,h}^{(k)}-w_t\|^2\big]+\gamma_t^2H_t^2(H_t-1)\|\nabla \ell_k(w_t)\|^2\nonumber\\
\overset{(c)}{\le}&\frac{1}{2}V_{t,k}+\gamma_t^2H_t^2(H_t-1)\|\nabla \ell_k(w_t)\|^2
\end{align}
where (a) uses the local update rule (\ref{eq:localGD}), (b) uses $\sum_{h=h'+1}^{H_t-1}h\le \sum_{h=1}^{H_t-1}h=\frac{H_t(H_t-1)}{2}$, and (c) uses $L^2\gamma_t^2H_t(H_t-1)\le \frac{1}{2}$. Hence, Eq. (\ref{eq:Vtk_le}) can be proved by rearranging the inequality above.

Therefore, we can prove Eq. (\ref{eq:sum_Vtk_le}) by summing Eq. (\ref{eq:Vtk_le}) over $k\in\mathcal{H}$ as follows.
\begin{align}
\sum_{k\in\mathcal{H}}V_{t,k}\le& 2\gamma_t^2H_t^2(H_t-1)\sum_{k\in\mathcal{H}}\|\nabla \ell_k(w_t)\|^2\nonumber\\
\le& 4\gamma_t^2H_t^2(H_t-1)\sum_{k\in\mathcal{H}}\big[\|\nabla \ell_k(w_t)-\nabla\ell_{\mathcal{H}}(w_t)\|^2+\|\nabla\ell_{\mathcal{H}}(w_t)\|^2\big]\nonumber\\
\le& 4\gamma_t^2|\mathcal{H}|H_t^2(H_t-1)\big[G^2+\|\nabla\ell_{\mathcal{H}}(w_t)\|^2\big],\nonumber
\end{align}
where the final $\le$ uses Assumption \ref{assum:hetero}.
\end{proof}

\begin{lemma}\label{lemma:Delta_t}
Define the following quantity obtained from Algorithm \ref{alg:FedAvg}.
\begin{align}
\Delta_t\overset{\rm def}{=}\frac{1}{|\mathcal{H}|}\sum_{k\in\mathcal{H}}(w_t^{(k)}-w_t)\overset{\rm Eq. (\ref{eq:localGD})}{=}-\frac{\gamma_t}{|\mathcal{H}|}\sum_{k\in\mathcal{H}}\sum_{h=0}^{H_t-1}\nabla\ell_k(w_{t,h}^{(k)}). \label{eq:Delta_t}
\end{align}
Suppose $L^2\gamma_t^2H_t(H_t-1)\le \frac{1}{2}$, the aggregator used in Algorithm \ref{alg:FedAvg} is $(f,\kappa)$-robust and Assumptions \ref{assum:smooth}-\ref{assum:hetero} hold. Then $\Delta_t$ above satisfies the following two bounds. 
\begin{align}
\|w_{t+1}-w_t-\Delta_t\|^2\le& 3\kappa H_t^2\gamma_t^2 G^2[1+8L^2H_t(H_t-1)\gamma_t^2]\nonumber\\
&+24\kappa L^2H_t^3(H_t-1)\gamma_t^4\|\nabla\ell_{\mathcal{H}}(w_t)\|^2,\label{eq:FL_agg}\\
\|\Delta_t+H_t\gamma_t\nabla \ell_{\mathcal{H}}(w_t)\|^2\le&4L^2\gamma_t^4H_t^3(H_t-1)\big[G^2+\|\nabla\ell_{\mathcal{H}}(w_t)\|^2\big],\label{eq:Delta_t_err}
\end{align}
\end{lemma}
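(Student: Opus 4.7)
The plan is to prove the two inequalities in the reverse order of the statement, establishing Eq.~(\ref{eq:Delta_t_err}) first and then using it as an ingredient in the proof of Eq.~(\ref{eq:FL_agg}).

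For Eq.~(\ref{eq:Delta_t_err}), I would exploit the identity $H_t\gamma_t\nabla\ell_{\mathcal{H}}(w_t) = \frac{H_t\gamma_t}{|\mathcal{H}|}\sum_{k\in\mathcal{H}}\nabla\ell_k(w_t)$ so that, combined with Eq.~(\ref{eq:Delta_t}), the quantity $\Delta_t + H_t\gamma_t\nabla\ell_{\mathcal{H}}(w_t)$ collapses into the single double sum $-\frac{\gamma_t}{|\mathcal{H}|}\sum_{k\in\mathcal{H}}\sum_{h=0}^{H_t-1}[\nabla\ell_k(w_{t,h}^{(k)}) - \nabla\ell_k(w_t)]$. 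Applying Jensen's inequality across its $|\mathcal{H}|H_t$ terms and then the $L$-smoothness of each $\ell_k$ (Assumption~\ref{assum:smooth}) reduces the bound to $\frac{L^2\gamma_t^2 H_t}{|\mathcal{H}|}\sum_{k\in\mathcal{H}} V_{t,k}$, at which point Eq.~(\ref{eq:sum_Vtk_le}) from Lemma~\ref{lemma:Vtk} delivers exactly $4L^2\gamma_t^4 H_t^3(H_t-1)[G^2+\|\nabla\ell_{\mathcal{H}}(w_t)\|^2]$.

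For Eq.~(\ref{eq:FL_agg}), the starting point is the global update (\ref{eq:Fed_agg}) together with the $(f,\kappa)$-robustness of $\mathcal{A}$ applied to the honest set $S = \mathcal{H}$ of size $n-f$, which gives
\begin{align*}
\|w_{t+1}-w_t-\Delta_t\|^2 \le \frac{\kappa}{|\mathcal{H}|}\sum_{k\in\mathcal{H}}\|(w_t^{(k)}-w_t) - \Delta_t\|^2,
\end{align*}
so everything reduces to controlling the honest clients' variance around $\Delta_t$. The key step is the three-piece decomposition
\begin{align*}
(w_t^{(k)}-w_t) - \Delta_t &= \bigl[(w_t^{(k)}-w_t) + H_t\gamma_t\nabla\ell_k(w_t)\bigr] \\
&\quad + H_t\gamma_t\bigl[\nabla\ell_{\mathcal{H}}(w_t) - \nabla\ell_k(w_t)\bigr] \\
&\quad - \bigl[\Delta_t + H_t\gamma_t\nabla\ell_{\mathcal{H}}(w_t)\bigr],
\end{align*}
to which I apply $\|u+v+w\|^2 \le 3(\|u\|^2+\|v\|^2+\|w\|^2)$. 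The first piece is a per-client drift that smoothness bounds by $L^2H_t\gamma_t^2 V_{t,k}$, whose $\mathcal{H}$-average is controlled via Eq.~(\ref{eq:sum_Vtk_le}); the second piece averages to $H_t^2\gamma_t^2 G^2$ by Assumption~\ref{assum:hetero}; the third piece does not depend on $k$ and is precisely what the already-proved Eq.~(\ref{eq:Delta_t_err}) bounds.

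Summing the three averaged bounds with the prefactor $3\kappa$ produces $3\kappa H_t^2\gamma_t^2 G^2 + 24\kappa L^2H_t^3(H_t-1)\gamma_t^4[G^2+\|\nabla\ell_{\mathcal{H}}(w_t)\|^2]$, which regroups into the stated form after factoring $3\kappa H_t^2\gamma_t^2 G^2[1+8L^2H_t(H_t-1)\gamma_t^2]$ out of the $G^2$ contributions. The main obstacle is choosing this three-piece decomposition correctly: a naive two-piece split either leaves an uncontrolled cross term coupling $\nabla\ell_k(w_t)$ to $\Delta_t$, or fails to separate the heterogeneity term $G^2$ from the $\|\nabla\ell_{\mathcal{H}}(w_t)\|^2$ factor on the right-hand side, which is why establishing Eq.~(\ref{eq:Delta_t_err}) first is essential for making the third piece controllable without double-counting.
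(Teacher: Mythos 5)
Your proposal is correct and follows essentially the same route as the paper: the $(f,\kappa)$-robustness step, the three-piece split into per-client drift, heterogeneity, and average drift, and the final appeal to Eq.~(\ref{eq:sum_Vtk_le}) all coincide with the paper's argument, and your arithmetic regroups to the stated constants. The only (harmless) organizational difference is that you prove Eq.~(\ref{eq:Delta_t_err}) first and reuse it for the third piece, whereas the paper decomposes inside the per-step sum over $h$ and bounds that piece directly by the same computation.
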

\begin{proof}
Since $\mathcal{A}$ is an $(f,\kappa)$-robust aggregator ($f=n-|\mathcal{H}|$) by Definition \ref{def:fk_agg}, we can prove Eq. (\ref{eq:FL_agg}) as follows.
\begin{align}
&\|w_{t+1}-w_t-\Delta_t\|^2\nonumber\\
=&\|\mathcal{A}(\{w_t^{(k)}-w_t\}_{k=1}^n)-\Delta_t\|^2\nonumber\\
\le&\frac{\kappa}{|\mathcal{H}|}\sum_{k\in\mathcal{H}}\|w_t^{(k)}-w_t-\Delta_t\|^2 \nonumber\\
\overset{(a)}{=}&\frac{\kappa}{|\mathcal{H}|}\sum_{k\in\mathcal{H}}\Big\|-\gamma_t\sum_{h=0}^{H_t-1}\nabla\ell_k(w_{t,h}^{(k)}) +\frac{\gamma_t}{|\mathcal{H}|}\sum_{k'\in\mathcal{H}}\sum_{h=0}^{H_t-1}\nabla\ell_{k'}(w_{t,h}^{(k')}) \Big\|^2 \nonumber\\
=&\frac{\kappa\gamma_t^2}{|\mathcal{H}|} \sum_{k\in\mathcal{H}}\Big\|\sum_{h=0}^{H_t-1}\Big[\nabla\ell_k(w_{t,h}^{(k)})-\frac{1}{|\mathcal{H}|}\sum_{k'\in\mathcal{H}}\nabla\ell_{k'}(w_{t,h}^{(k')}) \Big]\Big\|^2\nonumber\\
\le&\frac{\kappa H_t\gamma_t^2}{|\mathcal{H}|} \sum_{k\in\mathcal{H}}\sum_{h=0}^{H_t-1}\Big\|\nabla\ell_k(w_{t,h}^{(k)})-\frac{1}{|\mathcal{H}|}\sum_{k'\in\mathcal{H}}\nabla\ell_{k'}(w_{t,h}^{(k')}) \Big\|^2\nonumber\\
=&\kappa H_t\gamma_t^2\sum_{h=0}^{H_t-1}\frac{1}{|\mathcal{H}|}\sum_{k\in\mathcal{H}}\Big\|\nabla\ell_k(w_{t,h}^{(k)})-\frac{1}{|\mathcal{H}|}\sum_{k'\in\mathcal{H}}\nabla\ell_{k'}(w_{t,h}^{(k')}) \Big\|^2\nonumber\\
\le&3\kappa H_t\gamma_t^2\sum_{h=0}^{H_t-1}\frac{1}{|\mathcal{H}|}\sum_{k\in\mathcal{H}}\Big[\Big\|\nabla\ell_k(w_t)-\frac{1}{|\mathcal{H}|}\sum_{k'\in\mathcal{H}}\nabla\ell_{k'}(w_t) \Big\|^2\nonumber\\
&+\Big\|\nabla\ell_k(w_{t,h}^{(k)})-\nabla\ell_k(w_t)\Big\|^2 +\Big\|\frac{1}{|\mathcal{H}|}\sum_{k'\in\mathcal{H}}[\nabla\ell_{k'}(w_t)-\nabla\ell_{k'}(w_{t,h}^{(k')})]\Big\|^2\Big]\nonumber\\
\overset{(b)}{\le}&3\kappa H_t^2\gamma_t^2 G^2+\frac{3L^2\kappa H_t\gamma_t^2}{|\mathcal{H}|}\sum_{h=0}^{H_t-1}\sum_{k\in\mathcal{H}}\|w_{t,h}^{(k)}-w_t\|^2\nonumber\\
&+3\kappa H_t\gamma_t^2\sum_{h=0}^{H_t-1}\frac{1}{|\mathcal{H}|}\sum_{k'\in\mathcal{H}}\|\nabla\ell_{k'}(w_t)-\nabla\ell_{k'}(w_{t,h}^{(k')})\|^2\nonumber\\
\overset{(c)}{\le}&3\kappa H_t^2\gamma_t^2 G^2+\frac{6L^2\kappa H_t\gamma_t^2}{|\mathcal{H}|}\sum_{k\in\mathcal{H}}V_{t,k}\nonumber\\
\overset{(d)}{\le}&3\kappa H_t^2\gamma_t^2 G^2+\frac{6L^2\kappa H_t\gamma_t^2}{|\mathcal{H}|}\cdot 4\gamma_t^2|\mathcal{H}|H_t^2(H_t-1)\big[G^2+\|\nabla\ell_{\mathcal{H}}(w_t)\|^2\big]\nonumber\\
=&3\kappa H_t^2\gamma_t^2 G^2[1+8L^2H_t(H_t-1)\gamma_t^2]+24\kappa L^2H_t^3(H_t-1)\gamma_t^4\|\nabla\ell_{\mathcal{H}}(w_t)\|^2,\nonumber 
\end{align}
where (a) uses Eqs. (\ref{eq:localGD}) and (\ref{eq:Delta_t}), (b) uses Assumptions \ref{assum:smooth}-\ref{assum:hetero}, (c) uses Assumption \ref{assum:smooth} and defines $V_{t,k}\overset{\rm def}{=}\sum_{h=0}^{H_t-1}\|w_{t,h}^{(k)}-w_t\|^2$, (d) uses Eq. (\ref{eq:sum_Vtk_le}). 

Then we prove Eq. (\ref{eq:Delta_t_err}) as follows.
\begin{align}
&\|\Delta_t+H_t\gamma_t\nabla \ell_{\mathcal{H}}(w_t)\|^2\nonumber\\
\overset{(a)}{=}&\Big\|\frac{H_t\gamma_t}{|\mathcal{H}|}\sum_{k\in\mathcal{H}}\nabla \ell_k(w_t)-\frac{\gamma_t}{|\mathcal{H}|}\sum_{k\in\mathcal{H}}\sum_{h=0}^{H_t-1}\nabla\ell_k(w_{t,h}^{(k)})\Big\|^2\nonumber\\
=&H_t^2\gamma_t^2\Big\|\frac{1}{H_t|\mathcal{H}|}\sum_{k\in\mathcal{H}}\sum_{h=0}^{H_t-1}[\nabla \ell_k(w_t)-\nabla\ell_k(w_{t,h}^{(k)})]\Big\|^2\nonumber\\
\le&H_t^2\gamma_t^2\cdot\frac{1}{H_t|\mathcal{H}|}\sum_{k\in\mathcal{H}}\sum_{h=0}^{H_t-1}\|\nabla \ell_k(w_t)-\nabla\ell_k(w_{t,h}^{(k)})\|^2\nonumber\\
\overset{(b)}{\le}&\frac{L^2H_t\gamma_t^2}{|\mathcal{H}|}\sum_{k\in\mathcal{H}}\sum_{h=0}^{H_t-1}\|w_{t,h}^{(k)}-w_t\|^2\nonumber\\
\overset{(c)}{=}&\frac{L^2H_t\gamma_t^2}{|\mathcal{H}|}\sum_{k\in\mathcal{H}}V_{t,k}\nonumber\\
\overset{(d)}{\le}&4L^2\gamma_t^4H_t^3(H_t-1)\big[G^2+\|\nabla\ell_{\mathcal{H}}(w_t)\|^2\big],\nonumber
\end{align}
where (a) uses Eqs. (\ref{eq:HFL}) and (\ref{eq:Delta_t}), (b) uses Assumption \ref{assum:smooth}, (c) defines $V_{t,k}\overset{\rm def}{=}\sum_{h=0}^{H_t-1}\|w_{t,h}^{(k)}-w_t\|^2$, and (d) uses Eq. (\ref{eq:sum_Vtk_le}).
\end{proof}

\begin{lemma}\label{lemma:log_le}
For any $0\le x<1$, we have
\begin{align}
\log(1-x)\le -x
\end{align}
\end{lemma}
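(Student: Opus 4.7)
\textbf{Proof proposal for Lemma \ref{lemma:log_le}.} This is a standard one-variable calculus inequality, and the plan is to reduce it to a monotonicity statement about an auxiliary function. Define $\phi:[0,1)\to\mathbb{R}$ by $\phi(x)=\log(1-x)+x$; the goal is to show $\phi(x)\le 0$ on $[0,1)$, which is exactly the claimed inequality rearranged.

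The first step is to evaluate $\phi$ at the left endpoint: $\phi(0)=\log(1)+0=0$. The second step is to differentiate and check sign:
\begin{align*}
\phi'(x)=-\frac{1}{1-x}+1=\frac{-x}{1-x}.
\end{align*}
For every $x\in[0,1)$ the numerator is $\le 0$ and the denominator is $>0$, so $\phi'(x)\le 0$ throughout the interval, with equality only at $x=0$. Hence $\phi$ is non-increasing on $[0,1)$, which combined with $\phi(0)=0$ gives $\phi(x)\le 0$ for all $x\in[0,1)$, i.e.\ $\log(1-x)\le -x$, as required.

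There is no real obstacle here; the only thing to mention is that an alternative proof via the concavity of $\log$ (using $\log y\le y-1$ at $y=1-x$) would give the same inequality in one line, but the derivative argument above is the most self-contained. Since the lemma is invoked later only as a bound on exponentials/logs in the convergence-rate calculations, no sharper form (e.g.\ the full Taylor remainder $-\sum_{k\ge 1}x^k/k$) is needed.
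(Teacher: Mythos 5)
Your proof is correct. The paper proves the same inequality via the second-order Taylor expansion with Lagrange remainder: writing $g(x)=\log(1-x)$, it exhibits $\theta\in[0,1]$ with $\log(1-x)=-x-\frac{x^2}{2(1-\theta x)^2}\le -x$, i.e.\ it bounds the inequality by showing the remainder term is explicitly nonpositive. Your route instead checks $\phi(x)=\log(1-x)+x$ at $x=0$ and shows $\phi'\le 0$ on $[0,1)$, which is slightly more elementary (only the first derivative is needed, no appeal to Taylor's theorem). Both are one-line standard arguments and equally sufficient here; the paper's version has the minor advantage of displaying the exact deficit $\frac{x^2}{2(1-\theta x)^2}$, but since the lemma is only invoked to turn $(1-H\gamma\mu/8)^T$ into $\exp(-T H\gamma\mu/8)$ in the P\L{} convergence bound, no such sharpening is used. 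Your derivative computation and sign analysis are both right, so there is nothing to fix.
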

\begin{proof}
Denote the function $g(x)=\log(1-x)$, which has derivatives $g'(x)=(x-1)^{-1}$ and $g''(x)=-(x-1)^{-2}$. Then based on the Taylor's theorem, there exists $\theta\in[0,1]$ such that
\begin{align}
\log(1-x)=g(x)=g(0)+g'(0)x+\frac{1}{2}g''(\theta x)x^2=-x-\frac{x^2}{2(1-\theta x)^2}\le -x.\nonumber
\end{align}
\end{proof}

\subsection{Remaining Proof of Theorem \ref{thm:FedAvg_upper}}
Using $L$-smoothness of $\ell_{\mathcal{H}}$, we have
\begin{align}
&\ell_{\mathcal{H}}(w_{t+1})\nonumber\\
\le& \ell_{\mathcal{H}}(w_t)+\langle \nabla \ell_{\mathcal{H}}(w_t),w_{t+1}-w_t\rangle+\frac{L}{2}\|w_{t+1}-w_t\|^2\nonumber\\
\le& \ell_{\mathcal{H}}(w_t)\!+\!\langle \nabla \ell_{\mathcal{H}}(w_t),w_{t+1}\!-\!w_t\!-\!\Delta_t\rangle\!+\!\langle \nabla \ell_{\mathcal{H}}(w_t),\Delta_t\rangle\!+\!L\|w_{t+1}\!-\!w_t\!-\!\Delta_t\|^2\!+\!L\|\Delta_t\|^2\nonumber\\
\overset{(a)}{\le}& \ell_{\mathcal{H}}(w_t)+\frac{H_t\gamma_t}{4}\|\nabla \ell_{\mathcal{H}}(w_t)\|^2+\frac{1}{H_t\gamma_t}\|w_{t+1}-w_t-\Delta_t\|^2+\frac{1}{2H_t\gamma_t}\|\Delta_t+H_t\gamma_t\nabla \ell_{\mathcal{H}}(w_t)\|^2 \nonumber\\
& -\frac{H_t\gamma_t}{2}\|\nabla \ell_{\mathcal{H}}(w_t)\|^2 -\frac{1}{2H_t\gamma_t}\|\Delta_t\|^2+L\|w_{t+1}-w_t-\Delta_t\|^2+L\|\Delta_t\|^2\nonumber\\
\overset{(b)}{\le}& \ell_{\mathcal{H}}(w_t)-\frac{H_t\gamma_t}{4}\|\nabla \ell_{\mathcal{H}}(w_t)\|^2 -\Big(\frac{1}{2H_t\gamma_t}-L\Big)\|\Delta_t\|^2 \nonumber\\
&+\frac{1}{2H_t\gamma_t}\cdot 4L^2\gamma_t^4H_t^3(H_t-1)\big[G^2+\|\nabla\ell_{\mathcal{H}}(w_t)\|^2\big]\nonumber\\
&+\!\Big(L\!+\!\frac{1}{H_t\gamma_t}\Big)\big\{3{\kappa} H_t^2\gamma_t^2 G^2[1+8L^2H_t(H_t\!-\!1)\gamma_t^2]+24{\kappa} L^2H_t^3(H_t\!-\!1)\gamma_t^4\|\nabla\ell_{\mathcal{H}}(w_t)\|^2\big\}\nonumber\\
\overset{(c)}{\le}& \ell_{\mathcal{H}}(w_t)-\frac{H_t\gamma_t}{4}\big[1-8L^2\gamma_t^2H_t(H_t-1)-96{\kappa} L^2H_t(H_t-1)\gamma_t^2\big]\|\nabla \ell_{\mathcal{H}}(w_t)\|^2\nonumber\\
&+G^2H_t\gamma_t\{2L^2\gamma_t^2H_t(H_t-1)+3{\kappa}(1+LH_t\gamma_t)[1+8L^2H_t(H_t-1)\gamma_t^2]\},\label{eq:ldec1}
\end{align}
where (a) uses $\langle u,v\rangle\le \frac{H_t\gamma_t}{4}\|u\|^2+\frac{1}{H_t\gamma_t}\|v\|^2$ for $u=\nabla \ell_{\mathcal{H}}(w_t)$ and $v=w_{t+1}-w_t-\Delta_t$, as well as $\langle \nabla \ell_{\mathcal{H}}(w_t),\Delta_t\rangle=\langle u,v\rangle=\frac{1}{2}(\|u+v\|^2-\|u\|^2-\|v\|^2)$ for $u=\sqrt{H_t\gamma_t}\nabla \ell_{\mathcal{H}}(w_t)$ and $v=\frac{\Delta_t}{\sqrt{H_t\gamma_t}}$, (b) uses Eqs. (\ref{eq:FL_agg})-(\ref{eq:Delta_t_err}), (c) uses $\gamma_t\le \frac{1}{2LH_t}$. 

Select constant hyperparameters $\gamma_t=\gamma$ and $H_t=H$ such that 
\begin{align}
L^2\gamma^2H(H-1)\le \min\Big(\frac{1}{32},\frac{1}{384{\kappa} }\Big), \gamma\le \frac{1}{2LH}. \label{eq:hyp_cond}
\end{align}
Then Eq. (\ref{eq:ldec1}) simplifies into
\begin{align}
\ell_{\mathcal{H}}(w_{t+1})\le \ell_{\mathcal{H}}(w_t)\!-\!\frac{H\gamma}{16}\|\nabla\ell_{\mathcal{H}}(w_t)\|^2\!+\!G^2H\gamma\Big[2L^2\gamma^2H(H\!-\!1)+3{\kappa}\Big(1\!+\!\frac{1}{2}\Big)\Big(1\!+\!\frac{1}{4}\Big)\Big].\label{eq:ldec2}
\end{align}
Telescoping Eq. (\ref{eq:ldec2}) above over $t=0,1,\ldots,T-1$, we have
\begin{align}
\ell^*\le \ell_{\mathcal{H}}(w_T)\le \ell_{\mathcal{H}}(w_0) - \frac{H\gamma}{16}\sum_{t=0}^{T-1}\|\nabla\ell_{\mathcal{H}}(w_t)\|^2 + TG^2H\gamma\Big[2L^2\gamma^2H(H-1)+\frac{45{\kappa}}{8}\Big].\nonumber
\end{align}
Rearranging the inequality above, we prove the convergence rate (\ref{eq:grad_le}) as follows.
\begin{align}
\frac{1}{T}\sum_{t=0}^{T-1}\|\nabla\ell_{\mathcal{H}}(w_t)\|^2 \le& \frac{16}{TH\gamma}[\ell_{\mathcal{H}}(w_0)-\ell^*]+G^2[32L^2\gamma^2H(H-1)+90{\kappa}]\nonumber\\
\le& \frac{16cLH}{T^{2/3}}[\ell_{\mathcal{H}}(w_0)-\ell^*] +  G^2\Big[\frac{32(H-1)}{c^2HT^{2/3}}+90{\kappa}\Big]\nonumber\\
\le& \frac{16cLH[\ell_{\mathcal{H}}(w_0)-\ell^*]+G^2}{T^{2/3}}+90{\kappa}G^2,\nonumber
\end{align}
where we select the stepsize $\gamma=\frac{1}{c'LHT^{1/3}}$ with $c'=\max(4\sqrt{2},\sqrt{384{\kappa}})$ which satisfies the conditions in Eq. (\ref{eq:hyp_cond}). 

Furthermore, suppose Assumption \ref{assum:PL} holds, that is,
\begin{align}
\|\nabla\ell_{\mathcal{H}}(w)\|^2\ge 2\mu(\ell_{\mathcal{H}}(w)-\ell^*).\nonumber
\end{align}
Substituting the inequality above into Eq. (\ref{eq:ldec2}). 
\begin{align}
\ell_{\mathcal{H}}(w_{t+1})-\ell^*\le\Big(1-\frac{H\gamma\mu}{8}\Big)[\ell_{\mathcal{H}}(w_t)-\ell^*]+G^2H\gamma\Big[2L^2\gamma^2H(H-1)+\frac{45{\kappa}}{8}\Big].\nonumber
\end{align}
Iterating the inequality above over $t=0,1,\ldots,T-1$, we have 
\begin{align}
\ell_{\mathcal{H}}(w_T)-\ell^*\le&\Big(1-\frac{H\gamma\mu}{8}\Big)^T[\ell_{\mathcal{H}}(w_0)-\ell^*]+\frac{G^2}{\mu}\big[16L^2\gamma^2H(H-1)+45{\kappa}\big]\nonumber\\
\overset{(a)}{\le}&\exp\Big[T\log\Big(1-\frac{\mu}{8c'LT^{1-\beta}}\Big)\Big][\ell_{\mathcal{H}}(w_0)-\ell^*]+\frac{16G^2}{\mu c'^2T^{2-2\beta}}+\frac{45\kappa G^2}{\mu}\nonumber\\
\overset{(b)}{\le}&\exp\Big[T\Big(-\frac{\mu}{8c'LT^{1-\beta}}\Big)\Big][\ell_{\mathcal{H}}(w_0)-\ell^*]+\frac{G^2}{2\mu T^{2-2\beta}}+\frac{45\kappa G^2}{\mu}\nonumber\\
\le&\exp\Big(-\frac{\mu T^{\beta}}{8c'L}\Big)[\ell_{\mathcal{H}}(w_0)-\ell^*]+\frac{G^2}{2\mu T^{2-2\beta}}+\frac{45\kappa G^2}{\mu},\nonumber
\end{align}
where (a) uses the stepsize $\gamma=\frac{1}{c'LHT^{1-\beta}}$, (b) uses Lemma \ref{lemma:log_le} and $c'\ge 4\sqrt{2}$. 



\section{Experiment Details}\label{ExperDetail}

\subsection{Dirichlet-based partition strategies} 


We adopt the Dirichlet-based partitioning scheme~\citep{WaYuSuPa20} to simulate heterogeneous client data distributions for CIFAR-10. Specifically, we partition each dataset across clients using a Dirichlet-based distribution over class labels. In this setting, both the number of data points and the class proportions are imbalanced across clients. Specifically, we simulate a heterogeneous partition into \( n \) clients by drawing class proportions from a Dirichlet distribution:

\begin{equation}
    (p_{1,k}, p_{2,k}, \dots, p_{10,k}) \sim \mathrm{Dir}(\alpha_1, \alpha_2, \dots, \alpha_{10}),\nonumber
\end{equation}

where \( p_{c,k} \) denotes the proportion of training instances of class \( c\in\{1,2,\ldots,10\} \) assigned to client \( k \).
The distribution \( \mathrm{Dir}_{10}(\cdot) \) is the \( 10 \)-dimensional Dirichlet distribution. We set \( \alpha_1 = \alpha_2 = \dots = \alpha_K = \alpha \) to induce heterogeneity. 

As a property of the Dirichlet distribution, when \( \alpha_k < 1 \), the sampled class proportions tend to concentrate near the corners and edges of the probability simplex, so that clients receive data from only a few dominant classes, thereby simulating severe label imbalance and statistical heterogeneity—common characteristics of practical federated learning environments.

Based on the sampled proportions \( \{p_{c,k}\} \), we allocate the training data to each client accordingly. For evaluation, we use the original test set from each dataset as a global test set to ensure a fair comparison across all methods. 

The number of training samples per client is the same across 16 clients. Since CIFAR-10 has 50,000 training examples, we assign each client $50,000/16=3,125$ training samples.

\subsection{Model Details}

Table~\ref{tab:resnet20-gn} presents the detailed architecture of ResNet-20 with group normalization.


\begin{table}[t]
\centering
\caption{Architecture of ResNet-20 with Group Normalization (GN) for CIFAR-10}
\label{tab:resnet20-gn}
\setlength{\tabcolsep}{10pt}
\renewcommand{\arraystretch}{1.1}
\begin{tabular}{lcc}
\toprule
\textbf{Stage} & \textbf{Output Size} & \textbf{Layers} \\
\midrule
Input & $32 \times 32$ & $3\times3$ conv, 16 filters, stride 1 \\
\midrule
Stage 1 & $32 \times 32$ & 3 $\times$ [ $3\times3$ conv, 16 filters + GN($G=4$) + ReLU ] \\
Stage 2 & $16 \times 16$ & 3 $\times$ [ $3\times3$ conv, 32 filters + GN($G=8$) + ReLU ] \\
Stage 3 & $8 \times 8$   & 3 $\times$ [ $3\times3$ conv, 64 filters + GN($G=8$) + ReLU ] \\
\midrule
Output & $1 \times 1$ & Global average pooling, FC-10, softmax \\
\bottomrule
\end{tabular}
\end{table}

\subsection{Train Schemes}

\paragraph{Preprocess of CIFAR-10:}

For preprocessing the images in the CIFAR-10 datasets, we follow the standard data augmentation and normalization procedures. Specifically, we apply random cropping and horizontal flipping for data augmentation. For normalization, each color channel is standardized by subtracting the mean and dividing by the standard deviation of that channel. 
This ensures that the input images have zero mean and unit variance per channel, which is a common practice to stabilize and accelerate training.

\paragraph{Local Update:} We perform one epoch of local updates per client when $\alpha=0.1$, and two epochs of local updates per client when $\alpha=1,10$. Note that each epoch consists of multiple mini-batch SGD iterations, where each SGD step uses a batch size of 64. Therefore, we run FedRo with
$$
H = \left\lfloor \tfrac{1 \times 3125}{64} \right\rfloor = 49
$$
local SGD steps per round for $\alpha=0.1$, and with
$$
H = \left\lfloor \tfrac{2 \times 3125}{64} \right\rfloor = 98
$$
local SGD steps per round for $\alpha=1,10$.

\paragraph{Weight Decay:} We use L2 weight decay, i.e.,  and apply SGD to the cross-entropy loss with L2 regularizer $\tfrac{\mu}{2}\|w\|_2^2$ of the model parameter $w\in\mathbb{R}^d$. In our experiments, we set $\mu = 5 \times 10^{-4}$.

\paragraph{Learning Rate Schedule:} We use a step-wise diminishing learning rate schedule as follows. 
\begin{equation}\label{eq:lr_t}
    \gamma_t =
\begin{cases}
\gamma_0, & 0 \le t < \tfrac{T}{2}, \\[6pt]
0.1 \, \gamma_0, & \tfrac{T}{2} \le t < \tfrac{3T}{4}, \\[6pt]
0.01 \, \gamma_0, & \tfrac{3T}{4} \le t \le T,
\end{cases}
\end{equation}
where $T$ denotes the total number of communication rounds, and $\gamma_0$ is fine-tuned from $\{0.5, 0.2, 0.1, 0.05\}$.
 
\end{document}